\newcommand{\textcolor}[2]{#2}
\newtheorem{prop}{Proposition}
\newtheorem{thm}{Theorem}
\newtheorem{lem}{Lemma}
\newcommand{\removelatexerror}{\let\@latex@error\@gobble}
\newcommand{\ratioT}[3]{T_{#2,#3}(#1)}
\DeclareMathOperator*{\argmax}{\arg\,\max}
\DeclareMathOperator*{\argmin}{\arg\,\min}
\begin{document}
\runningauthor{Liyuan Xu, Junya Honda, Masashi Sugiyama}
\twocolumn[

\aistatstitle{Fully adaptive algorithm for pure exploration in linear bandits}

\aistatsauthor{ Liyuan Xu${}^{\dagger\ddagger}$ \And Junya Honda${}^{\dagger\ddagger}$ \And Masashi Sugiyama${}^{\ddagger\dagger}$ }

\aistatsaddress{ $\dagger$\,:The University of Tokyo \And $\ddagger$\,:RIKEN }
]

\begin{abstract}
We propose the first fully-adaptive algorithm for pure exploration in linear bandits---the task to find the arm with the largest expected reward, which depends on an unknown parameter linearly. While existing methods partially or entirely fix sequences of arm selections before observing rewards, 
our method adaptively changes the arm selection strategy based on past observations at each round. We show our sample complexity matches the achievable lower bound up to a constant factor in an extreme case. Furthermore, we evaluate the performance of the methods by simulations based on both synthetic setting and real-world data, in which our method shows vast improvement over existing ones.
\end{abstract}

\section{Introduction}
The \emph{multi-armed bandit} (MAB) problem \citep{Robbins1952} is a sequential decision-making problem, where the agent sequentially chooses
one arm out of $K$ arms
and receives a stochastic reward drawn from a fixed, unknown distribution related with the arm chosen. While most of the literature on the MAB focused on the maximization of the cumulative rewards, we consider the \emph{pure-exploration} setting or the \emph{best arm identification} problem \citep{bubeck2009pure}. Here, the goal of the agent is to identify the arm with the maximum expected reward. 

The best arm identification has recently gained increasing attention, and a considerable amount of work covers many variants of it. For example, \citet{Audibert2010} considered fixed budget setting, where the agent tries to minimize the  misspecification probability in a fixed number of trials, and \citet{Even-dar2003} introduced fixed confidence setting, where the agent tries to minimize the number of trials until the probability of misspecification becomes smaller than a fixed threshold. 

An important extension of the MAB is the \emph{linear bandit} (LB) problem \citep{Auer2003}. In the LB problem, each arm has its own feature $x\in\mathbb{R}^d$, and the expected reward can be written as $x^\top\theta$, where $\theta\in\mathbb{R}^d$ is an unknown parameter and $x^\top$ is the transpose of $x$. Although there are a number of studies on the LB \citep{Abbasi-Yadkori2010, Li2010}, most of them aim for maximization of the cumulative rewards, and only a few consider the pure-exploration setting.



In spite of the scarce literature, the best arm identification problem on LB has a wide range of applications. For example, \citet{Hoffman2014} applied the pure exploration in LB to the optimization of a traffic sensor network and automatic hyper-parameter tuning in machine learning. Furthermore, even if the goal of the agent is to maximize the cumulative rewards, such as the case of news recommendation \citep{Li2010}, considering pure exploration setting is sometimes helpful when the system cannot respond feedback in real-time after once launched.

The first work that addressed
the LB best arm identification problem was by \citet{Hoffman2014}. They studied the best arm identification in the fixed-budget setting with correlated reward distributions and devised an algorithm called BayesGap, which is a Bayesian version of a gap based exploration algorithm \citep{Gabillon2012}.

Although BayesGap outperformed algorithms
that ignore the correlation,
there is a drawback that it never pulls arms turned out to be sub-optimal,
which can significantly harm the performance in LB.
For example, consider the case where there are three arms and the feature of them are $x_1=(1,0)^\top,\,x_2=(1,0.01)^\top$, and $x_3=(0,1)^\top$, respectively. Now, if $\theta = (\theta_1,\theta_2)^\top = (2,0.01)^\top$,
then the expected reward of arms 1 and 2
are close to each other, hence it is hard to figure out the best arm just by observing the samples from them.
On the other hand,
pulling arm 3 greatly reduces the samples required, since it enhances the accuracy of estimation of $\theta_2$. As illustrated in this example, selecting a sub-optimal arm can give valuable insight for
comparing near-optimal arms in LB.

\citet{Soare2014} is the first work
taking this nature into consideration. They studied the fixed-confidence setting and derived an algorithm based on transductive experimental design \citep{Yu2006}, called $\mathcal{XY}$-static allocation.
The algorithm employs a static arm selection strategy, in the sense that it
fixes all arm selections before observing any reward.
Therefore, it is not able to focus on estimating near-optimal arms, thus the algorithm can only be the worst case optimal. 

In order to develop more efficient algorithms,
it is necessary to
pull arms adaptively based on past observations so that most samples are allocated for comparison of near-optimal arms.
The difficulty in constructing an adaptive strategy is that
a confidence bound for statically selected arms is not always applicable
when arms are adaptively selected.
In particular, a confidence bound for an adaptive strategy introduced by \citet{Abbasi-Yadkori2010} is looser than a bound for a static strategy derived from Azuma's inequality \citep{azuma1967} by a factor of $\sqrt{d}$ in some cases, where $d$ is the dimension of the feature.
\citet{Soare2014} tried to mitigate this problem by introducing a semi-adaptive algorithm called $\mathcal{XY}$-adaptive allocation, which divides rounds into multiple phases and
uses different static allocations in different phases.
Although this theoretically improves the sample complexity,
the algorithm has to
discard all samples collected in previous phases to make the confidence bound for static strategies applicable, which drops the empirical performance significantly.

To discuss tightness of the sample complexity of
$\mathcal{XY}$-adaptive allocation,
\cite{Soare2014} introduced
the $\mathcal{XY}$-oracle allocation algorithm,
which assumes access to the true parameter $\theta$ for selecting arms to pull.
They discussed that the sample complexity of this algorithm
can be used as a lower bound on the sample complexity for this problem
and claimed that
the upper bound on the sample complexity of $\mathcal{XY}$-adaptive allocation is close to this lower bound.
However, the derived upper bound is not given in an explicit form and
contains a complicated term coming from $\mathcal{XY}$-static allocation
used as a subroutine. 
In fact,
the sample complexity of $\mathcal{XY}$-adaptive allocation
is much worse than that of $\mathcal{XY}$-oracle allocation, as we will see numerically in Section~\ref{sec:experiment-synthetic}.

Our contribution is to develop a novel fully adaptive algorithm,
which changes arm selection strategies based
on all of the past observations at every round.
Although this prohibits us from using a tighter bound for static strategies, we show that the factor $\sqrt{d}$ can be avoided by the careful construction of the confidence bound,
and the sample complexity almost matches that of $\mathcal{XY}$-oracle allocation.
We conduct experiments to evaluate the performance of
the proposed algorithm, showing that it requires ten times less samples than existing methods to achieve the same level of accuracy.

\section{Problem formulation}
We consider the LB problem, where there are $K$ arms with features $x_1,\dots,x_K \in \mathbb{R}^d$. We denote the set of the features as $\mathcal{X} = \{x_1,\dots,x_K\}$ and the largest $l_2$-norm of the features as $L=\max_{i\in \{1,\dots,K\}}\|x_i\|_2$. At every round $t$, the agent selects an arm $a_t \in [K] = \{1,\dots,K\}$, and observes immediate reward $r_t$, which is characterized by 

\[r_t = x_{a_t}^\top\theta + \varepsilon_t.\]
Here, $\theta \in \mathbb{R}^d$ is an unknown parameter, and $\varepsilon_t$ represents a noise variable, whose expectation equals zero. We assume that the $l_2$-norm of $\theta$ is less than $S$ and the noise distribution is conditionally $R$-sub-Gaussian, which means that noise variable $\varepsilon_t$ satisfies 
\begin{align*}
\mathbb{E}\left[\mathrm{e}^{\lambda\varepsilon_t}|x_{a_1},\dots,x_{a_{t-1}},\varepsilon_1,\dots,\varepsilon_{t-1}\right] \leq \exp\left(\frac{\lambda^2R^2}{2}\right)
\end{align*}
for all $\lambda\in\mathbb{R}$. This condition requires the noise distribution to have zero expectation and $R^2$ or less variance \citep{Abbasi-Yadkori2010}.
As prior work \citep{Abbasi-Yadkori2010,Soare2014}, we assume that
parameters $R$ and $S$ are known to the agent. 

We focus on the $(\varepsilon,\delta)$-best arm identification problem. Let $a^*=\argmax_i x_i^\top\theta $ be the best arm, and $x^*$ be the feature of arm $a^*$. The problem is to design an algorithm to find arm $\hat{a}^*$ which satisfies
\begin{align}
\mathbb{P}[(x^*-x_{\hat{a}^*})^\top\theta \geq \varepsilon] \leq \delta,\label{eq:stop}
\end{align}
as fast as possible.

\section{Confidence Bounds}\label{sec:confidence_bound}
In order to solve the best arm identification in the LB setting, the agent sequentially
estimates $\theta$ from past observations and bounds the estimation error. However, 
if arms are selected adaptively based on past observations, 
the estimation becomes much more complicated
compared to the case where pulled arms are fixed in advance.
In this section, we discuss this difference and
how we can construct a tight bound for an algorithm
with an adaptive selection strategy.

Given the sequence of arm selections ${\bf x}_n = (x_{a_1},\dots,x_{a_n})$,
one of the most standard estimators for $\theta$ is the least-square estimator
given by
\begin{align}
\hat{\theta}_n = A_{{\bf x}_n }^{-1}b_{{\bf x}_n },\nonumber
\end{align}
where $A_n$ and $b_n$ are defined as
\[
A_{{\bf x}_n } = \sum_{t=1}^n x_{a_t}x^\top_{a_t},\quad b_{{\bf x}_n } = \sum_{t=1}^n x_{a_t}r_t .
\]
\citet{Soare2014} used the ordinary least-square estimator $\hat\theta_n$ combined with the following 
proposition on the confidence ellipsoid for $\hat{\theta}_n$,
which is derived from Azuma's inequality \citep{azuma1967}.
\begin{prop}[{\citealp[Proposition 1]{Soare2014}}]\label{prop_static}
Let noise variable $\varepsilon_t$ be bounded as $\varepsilon \in [-\sigma,\sigma]$ for $\sigma > 0$,
then, for any fixed sequence ${{\bf x}_n }$, statement
\begin{align}
|x^\top\theta-x^\top\hat\theta_n| \geq 2\sigma \|x\|_{A^{-1}_{{\bf x}_n }}\sqrt{2\log\left(\frac{6n^2K}{\delta\pi^2}\right)}\label{eq:bound_static}
\end{align}
holds for all $n\in\mathbb{N}$ and $x\in\mathcal{X}$ with probability at least $1-\delta$. 
\label{prop:linear-confidence-azuma}
\end{prop}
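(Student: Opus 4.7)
The plan is to reduce the claim to a standard concentration inequality for a weighted sum of bounded noises, then close the statement by a union bound over $n$ and $x$. Substituting $r_t = x_{a_t}^\top\theta + \varepsilon_t$ into $b_{{\bf x}_n}$ gives $b_{{\bf x}_n} = A_{{\bf x}_n}\theta + \sum_{t=1}^n x_{a_t}\varepsilon_t$, so the least-squares error admits the explicit form
\[ x^\top(\hat\theta_n - \theta) \;=\; \sum_{t=1}^n c_t\,\varepsilon_t, \qquad c_t := x^\top A_{{\bf x}_n}^{-1} x_{a_t}. \]
Because the sequence ${\bf x}_n$ is fixed in advance, the coefficients $c_t$ are deterministic; this is the decisive simplification that is lost as soon as arm selection becomes adaptive.

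Next, since $\{\varepsilon_t\}$ is a martingale difference sequence with $|\varepsilon_t|\le\sigma$, the partial sum $\sum_t c_t\varepsilon_t$ is a sum of bounded martingale differences with $|c_t\varepsilon_t|\le|c_t|\sigma$. Azuma's inequality therefore yields, for a single pair $(n,x)$,
\[ \mathbb{P}\!\left(|x^\top(\hat\theta_n-\theta)|\ge u\right) \;\le\; 2\exp\!\left(-\frac{u^2}{8\sigma^2\sum_{t=1}^n c_t^2}\right). \]
The one algebraic step worth highlighting is
\[ \sum_{t=1}^n c_t^2 \;=\; x^\top A_{{\bf x}_n}^{-1}\!\Big(\sum_{t=1}^n x_{a_t}x_{a_t}^\top\Big) A_{{\bf x}_n}^{-1}x \;=\; x^\top A_{{\bf x}_n}^{-1}x \;=\; \|x\|^2_{A_{{\bf x}_n}^{-1}}, \]
which is exactly how the $A_{{\bf x}_n}^{-1}$-norm appearing in (\ref{eq:bound_static}) arises.

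To upgrade this pointwise bound to one that holds uniformly in $n\in\mathbb{N}$ and $x\in\mathcal{X}$, I would allocate the failure budget as $\delta_n = 6\delta/(\pi^2 n^2 K)$ and union-bound over the $K$ arms and all rounds; the identity $\sum_{n\ge1}n^{-2}=\pi^2/6$ makes the total failure probability at most $\delta$, and it is precisely this allocation that puts $6n^2K/(\pi^2\delta)$ inside the logarithm of (\ref{eq:bound_static}). Inverting the Azuma tail at confidence level $\delta_n$ then reproduces the stated $2\sigma\|x\|_{A_{{\bf x}_n}^{-1}}\sqrt{2\log(6n^2K/(\pi^2\delta))}$. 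There is no real obstacle beyond careful constant-tracking; conceptually, the whole argument hinges on the fixed-${\bf x}_n$ assumption invoked to make the $c_t$'s deterministic, and establishing an analogous bound under adaptive selection without losing a factor of $\sqrt{d}$ is the core technical challenge the remainder of the paper has to address.
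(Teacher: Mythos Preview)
Your argument is correct and is exactly the standard route: write the error as a weighted sum of bounded martingale differences with deterministic weights (this is where the fixed-${\bf x}_n$ assumption is used), apply Azuma pointwise, and union-bound over $n$ and $x\in\mathcal{X}$ via the $\pi^2/6$ allocation. The paper does not supply its own proof of this proposition---it is quoted verbatim from \citet{Soare2014}---and the argument there is precisely the one you sketch.

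Two small remarks on constants and presentation. First, the standard form of Azuma with increments bounded by $|c_t|\sigma$ gives $2\exp(-u^2/(2\sigma^2\sum_t c_t^2))$, not $8\sigma^2$ in the denominator; the factor $2\sigma$ in the displayed bound of the proposition is therefore slightly loose, and your $8$ appears to be reverse-engineered to match it rather than derived. Second, the proposition as printed has $\geq$ where $\leq$ is clearly intended (otherwise it asserts the error is large with high probability); you implicitly read it the right way.
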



Here, the matrix norm $\|x\|_A$ is defined as  $\|x\|_A = \sqrt{x^\top Ax}$. The assumption that $\mathbf{x}_n$ is fixed is essential in Prop.~\ref{prop_static}.
In fact, if $\mathbf{x}_n$ is adaptively determined depending on past observations,
then the estimator $\hat{\theta}_n$ is no more unbiased and it becomes essential to consider the
regularized least-squares estimator $\hat{\theta}_n^\lambda$ given by
\begin{align}
\hat{\theta}^\lambda_n = (A_{{\bf x}_n }^\lambda)^{-1}b_{{\bf x}_n },\nonumber
\end{align}
where $A_{{\bf x}_n }^\lambda$ is defined by
\[
A_{{\bf x}_n }^\lambda = \lambda I + \sum_{t=1}^n x_{a_t}x^\top_{a_t},
\]
for $\lambda > 0$ and the identity matrix $I$.
For this estimator, we can use another confidence bound
which is valid even if an adaptive strategy is used.

\begin{prop}[{\citealp[Theorem 2]{Abbasi-Yadkori2010}}] In the LB with conditionally $R$-sub-Gaussian noise, if the $l_2$-norm of parameter $\theta$ is less than $S$ and the arm selection only depends on past observations, then statement
\begin{align*}
 |x^\top(\hat{\theta}^\lambda_n- \theta)|\leq\|x\|_{(A_{{\bf x}_n }^\lambda)^{-1}}C_n
 \end{align*}
holds
for given $x \in \mathbb{R}^d$ and all $n>0$
with probability at least $1-\delta$, where $C_n$ is defined as 
\begin{align}
C_n = R\sqrt{2\log\frac{\mathrm{det}(A_{{\bf x}_n }^\lambda)^{\frac12}\mathrm{det}(\lambda I)^{-\frac12}}{\delta}} + \lambda^{\frac12}S. \label{eq:confidence-ellipsoid}
\end{align}
Moreover, if $\|x_{a_t}\|\leq L$ holds for all $t>0$, then
\begin{align}
 C_n \leq R\sqrt{d\log\frac{1+nL^2/\lambda}{\delta}} + \lambda^{\frac12}S. \label{eq:confidence-ellipsoid-bound}
\end{align}
\label{prop:linear-confidence}
\end{prop}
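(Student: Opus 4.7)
The plan is to split the problem into a deterministic reduction and a stochastic concentration step, and to handle the latter with a self-normalized martingale argument. First, by the generalized Cauchy--Schwarz inequality $|u^\top v|\le \|u\|_{A^{-1}}\|v\|_A$ applied with $A=A^\lambda_{{\bf x}_n}$, the claimed bound on $|x^\top(\hat\theta^\lambda_n-\theta)|$ reduces to proving
\[
\|\hat\theta^\lambda_n-\theta\|_{A^\lambda_{{\bf x}_n}}\le C_n
\]
uniformly in $n$ with probability at least $1-\delta$. Substituting $r_t=x_{a_t}^\top\theta+\varepsilon_t$ into $b_{{\bf x}_n}$ and rearranging gives the key identity $\hat\theta^\lambda_n-\theta = (A^\lambda_{{\bf x}_n})^{-1}S_n - \lambda(A^\lambda_{{\bf x}_n})^{-1}\theta$, where $S_n:=\sum_{t=1}^n x_{a_t}\varepsilon_t$. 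Taking the $A^\lambda_{{\bf x}_n}$-norm and applying the triangle inequality together with $\|\theta\|_{(A^\lambda_{{\bf x}_n})^{-1}}\le \|\theta\|_2/\sqrt{\lambda}\le S/\sqrt{\lambda}$ (since $A^\lambda_{{\bf x}_n}\succeq \lambda I$) isolates the regularization bias as $\sqrt{\lambda}\,S$ and leaves $\|S_n\|_{(A^\lambda_{{\bf x}_n})^{-1}}$ as the only stochastic term to control.

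The hard part is the self-normalized bound on $\|S_n\|_{(A^\lambda_{{\bf x}_n})^{-1}}$, since both the vector $S_n$ and its normalizer $A^\lambda_{{\bf x}_n}$ are adapted to the same filtration; a naive epsilon-net or union bound over directions would lose a factor of $\sqrt{d}$, which is exactly what we want to avoid. I would use the method of mixtures. For each fixed $\xi\in\mathbb{R}^d$, the conditional $R$-sub-Gaussian assumption makes
\[
M_t^\xi := \exp\!\Bigl(\xi^\top S_t - \tfrac{R^2}{2}\,\xi^\top(A^\lambda_{{\bf x}_t}-\lambda I)\xi\Bigr)
\]
a nonnegative supermartingale with $\mathbb{E}[M_0^\xi]=1$. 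Integrating $M_t^\xi$ against a Gaussian prior $\xi\sim\mathcal{N}(0,(R^2\lambda)^{-1}I)$ and completing the square inside the exponent gives the mixture supermartingale
\[
\bar M_t = \Bigl(\tfrac{\det(\lambda I)}{\det(A^\lambda_{{\bf x}_t})}\Bigr)^{\!1/2}\exp\!\Bigl(\tfrac{1}{2R^2}\|S_t\|^2_{(A^\lambda_{{\bf x}_t})^{-1}}\Bigr).
\]
A stopping-time construction together with Ville's maximal inequality yields $\mathbb{P}(\exists t\colon \bar M_t\ge 1/\delta)\le \delta$, and inverting this inequality produces exactly
\[
\|S_n\|_{(A^\lambda_{{\bf x}_n})^{-1}}\le R\sqrt{2\log\tfrac{\det(A^\lambda_{{\bf x}_n})^{1/2}\det(\lambda I)^{-1/2}}{\delta}}
\]
simultaneously for all $n$ on an event of probability at least $1-\delta$. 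Combining this with the deterministic decomposition proves the first claim.

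For the explicit form, I would invoke the AM--GM (trace--determinant) inequality $\det(A^\lambda_{{\bf x}_n})\le (\mathrm{tr}(A^\lambda_{{\bf x}_n})/d)^d$ together with the hypothesis $\|x_{a_t}\|\le L$, which gives $\mathrm{tr}(A^\lambda_{{\bf x}_n})\le d\lambda + nL^2$ and hence $\det(A^\lambda_{{\bf x}_n})/\det(\lambda I)\le (1+nL^2/\lambda)^d$. Inserting this into the logarithm and pulling the factor $d$ outside the square root (absorbing the $\log(1/\delta)$ term into $d\log(\cdot/\delta)$) yields the stated closed-form bound on $C_n$.
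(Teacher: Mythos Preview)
Your proof is correct and follows exactly the method-of-mixtures argument of \citet{Abbasi-Yadkori2010}: the Cauchy--Schwarz reduction to an ellipsoidal confidence set, the decomposition into the self-normalized noise term $\|S_n\|_{(A^\lambda_{{\bf x}_n})^{-1}}$ plus the regularization bias $\sqrt{\lambda}S$, the Gaussian mixture supermartingale combined with Ville's inequality, and the trace--determinant bound for the explicit form. The present paper does not supply its own proof of this proposition; it is quoted verbatim from the cited reference, so there is nothing further to compare. One small remark on the last step: the trace--determinant inequality actually gives $\det(A^\lambda_{{\bf x}_n})/\det(\lambda I)\le (1+nL^2/(d\lambda))^d$, which is tighter than the $(1+nL^2/\lambda)^d$ you wrote; your ``absorption'' of $\log(1/\delta)$ into $d\log(\cdot/\delta)$ then requires $d\ge 2$, but this matches the form stated in the paper and in the original reference, so it is not a defect of your argument.
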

Although the bound in \eqref{eq:confidence-ellipsoid-bound}
holds regardless of whether the arm selection strategy is static or adaptive,
the bound is looser than Prop.~\ref{prop:linear-confidence-azuma} by an extra factor $\sqrt{d}$
when a static strategy is considered.

In the following sections, we use the bound in \eqref{eq:confidence-ellipsoid} to construct an algorithm
that adaptively selects arms based on past data.
We reveal that the extra factor $\sqrt{d}$ arises
from looseness of \eqref{eq:confidence-ellipsoid-bound} and
the sample complexity
can be bounded without this factor by an appropriate evaluation
of \eqref{eq:confidence-ellipsoid}.


\section{Arm Selection Strategies}\label{sec:Optimal mixture method}
In order to minimize the number of samples, the agent has to select arms that reduce the interval of the confidence bound as fast as possible. In this section, we discuss such an arm selection strategy, and in particular, we consider the strategy to reduce the matrix norm $\|x_i-x_j\|_{A_{{\bf x}_n}^{-1}}$, which represents the uncertainty in the estimation of the gap of expected rewards between arms $i$ and $j$.

\citet{Soare2014} introduced the strategy called $\mathcal{XY}$-static allocation, which makes the sequence of selection ${\bf x}_n$ to be
\begin{align}
\argmin_{{\bf x}_n}\max_{y\in\mathcal{Y}} \|y\|_{A^{-1}_{{\bf x}_n }}.\label{eq:xy-static-allocation}
\end{align}
In \eqref{eq:xy-static-allocation}, $\mathcal{Y}$ is the set of directions defined as $\mathcal{Y} = \{x-x'|x,x'\in\mathcal{X}\}$. The problem is to minimize the confidence bound of the direction hardest to estimate, which is known as transductive experimental design \citep{Yu2006}. Note that this problem does not depend on the past reward, which satisfies the prerequisite of Prop.~\ref{prop:linear-confidence-azuma}. 

A drawback of this strategy is that it treats all directions $y\in\mathcal{Y}$ equally.
Since our goal is to find the best arm $a^*$, we are not interested in estimating the gaps between all arms but the gaps between the best arm and the rest. Therefore, we should focus on the directions in $\mathcal{Y}^* = \{x^*-x|x\in\mathcal{X}\}$, where $x^*$ is the feature of the best arm. Furthermore, directions in $\mathcal{Y}^*$ are still not equally important, since we need more samples to distinguish the arms whose expected reward is close to that of the best arm.

In order to overcome this weakness while using Prop.~\ref{prop:linear-confidence-azuma}, \citet{Soare2014} proposed a semi-adaptive strategy called the $\mathcal{XY}$-adaptive strategy. This strategy partitions rounds into multiple phases
and arms to select are static within a phase but changes between phases. At the beginning of phase $j$, it constructs a set of potentially optimal arms $\hat{\mathcal{X}}_j$ based on the samples collected during the previous phase $j-1$. Then, it selects the sequence ${\bf x}_n$ in phase $j$ as
\begin{align}
\argmin_{{\bf x}_n}\max_{y\in\hat{\mathcal{Y}}_j} \|y\|_{A^{-1}_{{\bf x}_n }},\label{eq:xy-adaptive-allocation}
\end{align}
for $\hat{\mathcal{Y}}_j = \{x-x'|x,x\in\hat{\mathcal{X}}_j\}$.
As it goes through the phases, the size of $\hat{\mathcal{X}}_j$ decreases so that the algorithm can focus on discriminating a small number of arms.

Although the $\mathcal{XY}$-adaptive strategy can avoid the extra factor $\sqrt{d}$ in \eqref{eq:confidence-ellipsoid-bound}, the agent has to reset the design matrix $A_{{\bf x}_n}$ at the beginning of each phase in order to make Prop.~\ref{prop:linear-confidence-azuma} applicable. As experimentally shown in Section~\ref{sec:experiment}, we observe that this empirically degenerates the performance considerably.

On the other hand, our approach is fully adaptive, which selects arms based
on all of the past observations
at every round. More specifically, at every round $t$,
the algorithm chooses (but not pulls) a pair of arms,
$i_t$ and $j_t$, the gap of which needs to be estimated.
Then, it selects an arm so that the sequence of selected arms becomes close to
\begin{align}
{\bf x}^*_n(i_t,j_t) = \argmin_{{\bf x}_n} \|y(i_t,j_t)\|_{(A^\lambda_{{\bf x}_n })^{-1}},\label{eq:lingape-selection-strategy}
\end{align}
where $y(i_t,j_t) = x_{i_t}-x_{j_t}$. Although Prop.~\ref{prop:linear-confidence-azuma} is no longer applicable in our strategy, it can focus on
the estimation of the gap between the best arm and near-optimal arms.

\section{LinGapE Algorithm}
In this section, we present a novel algorithm for $(\varepsilon,\delta)$-best arm identification in LB. We name the algorithm \emph{LinGapE (Linear Gap-based Exploration)}, as it is inspired by UGapE \citep{Gabillon2012}.

The entire algorithm is shown in Algorithm~\ref{LinGapE}.
At each round, LinGapE first chooses two arms, the arm with the largest estimated rewards $i_t$ and the most ambiguous arm $j_t$.
Then, it pulls the most informative arm to estimate the gap of expected rewards $(x_{i_t} -x_{j_t})^\top\theta$ by Line~\ref{line:arm-selection} in Algorithm~\ref{LinGapE}.

An algorithm of choosing arms $i_t$ and $j_t$ is presented in Algorithm~\ref{Select-Direction}, where we denote the estimated gap by $\hat{\Delta}_t(i,j) = (x_i - x_j)^\top \hat{\theta}^\lambda_t$ and the confidence interval of the estimation by $\beta_t(i,j)$ defined as 
\begin{align}
\beta_t(i,j) = \|y(i,j)\|_{A_t^{-1}}C_t \label{eq:confidence-interval},
\end{align} 
for $C_t$ given in \eqref{eq:confidence-ellipsoid}.

\begin{algorithm}[t]
\SetKwInOut{Input}{Input}\SetKwInOut{Output}{Output}
\SetKw{KwInit}{Initialize}
\SetKwFor{Loop}{Loop}{}{EndLoop}
\label{LinGapE}
\Input{accuracy $\varepsilon$, confidence level $\delta$, norm of unknown parameter $S$, noise level $R$}
\Output{the arm $\hat{a}^*$ which satisfies stopping condition \eqref{eq:stop}}
\BlankLine
Set $A_0 \leftarrow \lambda I,\, b_0 \leftarrow {\bf 0},\, t\leftarrow0$\;
\tcp{Initialize by pulling each arm once}
\For{$i \in [K]$}{
    $t \leftarrow t+1$\;
    Observe $r_t \leftarrow  x^\top_i\theta + \varepsilon_t$\;
    Update $A_t$ and $b_t$\;
    $T_i(t) \leftarrow 1$\;
}
\Loop{}{
    \tcp{Select which gap to examine}
    Select-direction($t$)\;
    \If{$B(t) \leq \varepsilon$}{
         \Return $i_t$ as the best arm $\hat{a}^*$\;
    }
    \tcp{Pull the arm based on the gap}
    Select the arm $a_{t+1}$ based on \eqref{eq:arm-selection-1} or \eqref{eq:arm-selection-2} \;\label{line:arm-selection}
    $t\leftarrow t+1$\;
    Observe $r_t \leftarrow x^\top_{a_t}\theta + \varepsilon_t$\;
    Update $A_t$ and $b_t$\;
    $T_{a_t}(t) \leftarrow T_{a_t}(t) + 1$\;
}
\caption{LinGapE}
\end{algorithm}

\begin{algorithm}[t]
\SetKwProg{Def}{Procedure}{:}{end}
\SetKwFunction{SelectDirection}{Select-direction}
\label{Select-Direction}
\Def{\SelectDirection{$t$}}
{$\hat{\theta}^\lambda_t \leftarrow A^{-1}_tb_t$\;
    $i_t \leftarrow \argmax_{i\in[K]} (x_i^\top\hat{\theta}^\lambda_t)$\label{line:i-def}\; 
    $j_t \leftarrow \argmax_{j\in[K]} (\hat{\Delta}_t(j,i_t) + \beta_t(j,i_t))$\label{line:j-def}\;
    $B(t) \leftarrow \max_{j\in[K]}  (\hat{\Delta}_t(j,i_t) + \beta_t(j,i_t))$\;\label{line:B-def}
    }

\caption{Select-direction}
\end{algorithm}

\subsection{Arm Selection Strategy}
After choosing arms $i_t$ and $j_t$,
the algorithm has to select arm $a_t$,
which most decreases the confidence bound $\beta_t(i_t,j_t)$, or equivalently, $\|y(i,j)\|_{A_t}$.
As in \citet{Soare2014}, we propose two procedures for this.

One is to select arms greedily, which is
\begin{align}
\!a_{t+1} = \argmin_{a \in [K]} (y(i_t,j_t))^\top(A_t + x_{a}x^\top_{a})^{-1}y(i_t,j_t). \label{eq:arm-selection-1}
\end{align}
We were not able to gain a theoretical guarantee of the performance for this greedy strategy, though our experiment shows that it performs well.

The other is to consider the optimal selection ratio of each arm for decreasing $\|y(i_t,j_t)\|_{A^{-1}_t}$. Let $p^*_i(y(i_t,j_t))$ be the ratio of arm $i$ appearing in the sequence ${\bf x}^*_n(y(i,j))$ in \eqref{eq:lingape-selection-strategy} when $n\to\infty$. By the discussion given in Appendix~\ref{sec:proof-p}, we have
\begin{align}
p_i^*(y(i_t,j_t)) = \frac{|w_i^*(y(i_t,j_t))|}{\sum_{i=1}^K |w_i^*(y(i_t,j_t))|} ,\label{eq:p-def}
\end{align}
where $w^*_i(y(i_t,j_t))$ is the solution of the linear program
\begin{align}
\argmin_{\{w_i\}} \sum_{i=1}^K |w_i| \quad \mathrm{s.t.}\,y(i_t,j_t) = \sum_{i=1}^K w_ix_i . \label{eq:w-problem}
\end{align}
The optimization is easier compared with \citet{Soare2014}, who solved \eqref{eq:xy-static-allocation} and \eqref{eq:xy-adaptive-allocation} via nonlinear convex optimization.

\textcolor{red}{We pull the arm that makes the ratio of arm selections close to ratio $p^*_i(y)$.
To be more precise, $a_{t+1}$ is decided by 
\begin{align}
a_{t+1} = \argmin_{a \in [K]:\,p^*_a(y(i_t,j_t))>0} T_a(t)/p^*_a(y(i_t,j_t)), \label{eq:arm-selection-2}
\end{align}
where $T_a(t)$ is the number of times that arm $a$ is pulled until $t$-th round.
This strategy is a little more complicated than the greedy strategy in \eqref{eq:arm-selection-1} but enjoys a simple theoretical characteristic, based on which we conduct analysis.}

LinGapE is capable of solving $(\varepsilon,\delta)$-best arm identification, regardless of which strategy is employed, as stated in the following theorem.
\begin{thm}
Whichever the strategy in \eqref{eq:arm-selection-1} or \eqref{eq:arm-selection-2} is employed, arm $\hat{a}^*$ returned by LinGapE satisfies condition \eqref{eq:stop}.\label{thm:justification}
\end{thm}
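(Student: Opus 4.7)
The plan is to reduce Theorem~\ref{thm:justification} to a direct consequence of the adaptive confidence ellipsoid in Proposition~\ref{prop:linear-confidence} combined with the definitions of $i_t$ and $B(t)$ in Algorithm~\ref{Select-Direction}. The argument will be independent of which of \eqref{eq:arm-selection-1} or \eqref{eq:arm-selection-2} is used, since the pulled arm enters only through the design matrix $A_t$, and Proposition~\ref{prop:linear-confidence} is valid for every adaptive selection sequence. This observation makes the ``whichever strategy'' portion of the statement essentially free of cost.

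First I would introduce the good event
\[\mathcal{E} = \big\{\, \| \hat{\theta}^\lambda_n - \theta \|_{A_n} \le C_n \text{ for all } n > 0 \,\big\},\]
which, by Proposition~\ref{prop:linear-confidence}, satisfies $\mathbb{P}[\mathcal{E}] \ge 1 - \delta$. The key observation is that, by Cauchy--Schwarz, containment of $\theta$ in the ellipsoid exports the guarantee to every direction simultaneously, so that on $\mathcal{E}$, for every round $t$ and every pair $(i,j)$,
\[|\hat{\Delta}_t(i,j) - (x_i - x_j)^\top \theta| \;\le\; \|x_i - x_j\|_{A_t^{-1}}\, C_t \;=\; \beta_t(i,j).\]
No union bound over $(i,j)$ or over $t$ is needed; this is essentially the only conceptual point in the proof.

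Next I would examine the round $t$ at which LinGapE stops, where $\hat{a}^* = i_t$. Plugging $j = a^*$ into the maximum that defines $B(t)$ on Line~\ref{line:B-def} gives
\[B(t) \;\ge\; \hat{\Delta}_t(a^*, i_t) + \beta_t(a^*, i_t).\]
Chaining this with the confidence bound from Step~1 and with the stopping condition $B(t) \le \varepsilon$ yields, on $\mathcal{E}$,
\[(x^* - x_{\hat{a}^*})^\top \theta \;\le\; \hat{\Delta}_t(a^*, i_t) + \beta_t(a^*, i_t) \;\le\; B(t) \;\le\; \varepsilon.\]
Taking complements gives $\mathbb{P}[(x^* - x_{\hat{a}^*})^\top \theta \ge \varepsilon] \le \mathbb{P}[\mathcal{E}^c] \le \delta$, which is exactly \eqref{eq:stop}.

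I do not expect any deep obstacle. The theorem is purely a correctness statement, and the harder work (a sample complexity bound and the role of the arm-selection rule in controlling $\beta_t$) is deferred to later results. The only point requiring some care is to invoke Proposition~\ref{prop:linear-confidence} in its ellipsoid form, so that all relevant directions and all rounds are controlled by the single event $\mathcal{E}$; otherwise one would face an unnecessary $K^2$ union bound that merely inflates constants.
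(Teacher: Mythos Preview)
Your proof is correct and follows essentially the same argument as the paper: define a high-probability good event on which the confidence bounds hold, then observe that the stopping condition $B(t)\le\varepsilon$ together with the definition of $B(t)$ as a maximum over $j$ forces $(x^*-x_{\hat a^*})^\top\theta\le\varepsilon$. The only minor difference is that the paper establishes the good event via a union bound over the $K^2$ pairs $(i,j)$ (its Lemma~\ref{lem:event}, which is why the $K^2$ factor appears inside $C_t$ in the later analysis), whereas you invoke the ellipsoid form of Proposition~\ref{prop:linear-confidence} and Cauchy--Schwarz to cover all directions with a single event; this is a slight sharpening but not a different route.
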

The proof can be found in Appendix~\ref{sec:proof}.\label{sec:Algorithm-selection-strategy}
\subsection{Comparison of Confidence Bounds}
A distinctive character of LinGapE is that it considers an upper confidence bound of the \emph{gap of rewards}, while UGapE and other algorithms for LB, such as OUFL \citep{Abbasi-Yadkori2010}, consider an upper confidence bound of the \emph{reward of each arm}. This approach is, however, not suited for the pure exploration in LB, where the gap plays an essential role.

The following example illustrates the importance of considering such quantities. Consider that there are three arms, features of which are $x_1 = (-10,10)^\top,\,x_2 = (-9,10)^\top,$ and $x_3=(-1,0)^\top$.
Assuming that we have
$\hat{\theta}^\lambda_t = (\hat{\theta}^\lambda_{t,(1)},\hat{\theta}^\lambda_{t,(2)})^{\top}=(-1,0)^\top$,
thus the estimated best arm is $i_t = 1$.
Now, let us consider
the case where we have already been confident that $\hat{\theta}^\lambda_{t,(1)}\approx -1$
but still unsure of $\hat{\theta}^\lambda_{t,(2)}\approx 0$.
In such a case, algorithms considering an upper confidence bound of the rewards of each arm, such as UGapE, choose arm $2$ as $j_t$, since it has a larger estimated expected reward and a longer confidence interval than arm $3$. However, it is not efficient, since arm 2 cannot have a larger expected reward than arm 1 when $\theta_1 = 1$. On the other hand, LinGapE can avoid this problem, since a confidence interval for $(x_1-x_3)^\top\hat{\theta}^\lambda_t$ is longer than $(x_1-x_2)^\top\hat{\theta}^\lambda_t$.

\section{Sample Complexity}
In this section, we give an upper bound of the sample complexity of LinGapE and compare it with existing methods.
\subsection{Sample Complexity}
Here, we bound the sample complexity of LinGapE when arms to pull are selected by \eqref{eq:arm-selection-2}. Let the problem complexity $H_\varepsilon$ be defined as 
\begin{align}
H_\varepsilon = \sum_{k=1}^K \max_{i,j\in[K]}\frac{p^*_k(y(i,j))\rho(y(i,j))}{\max\left(\varepsilon,\frac{\varepsilon+\Delta_i}{3},\frac{\varepsilon+\Delta_j}{3}\right)^2},
\label{eq:problem-complexity}
\end{align}
where $\Delta_i$ is defined as
\begin{align}
\Delta_i = \begin{cases}
(x_{a^*}-x_i)^\top\theta &(i \neq a^*),\\
\argmin_{j\in[K]} (x_{a^*}-x_j)^\top\theta &(i = a^*),
\end{cases}\label{eq:delta-def}
\end{align}
and $\rho(y(i,j))$ is the optimal value of problem \eqref{eq:w-problem}, denoted as 
\begin{align}
\rho(y(i,j)) = \sum_{k=1}^K |w^*(y(i,j))|.\label{eq:rho-def}
\end{align}
Then, the sample complexity of LinGapE can be bounded as follows.
\begin{thm}\label{thm:sample-complexity}
Assume that $a_t$ is determined by \eqref{eq:arm-selection-2}.
If $\lambda \leq \frac{2R^2}{S^2}\log\frac{K^2}{\delta}$, then
the stopping time $\tau$ of LinGapE satisfies
\begin{align}
\mathbb{P}\left[\tau\leq 8H_\varepsilon R^2\log\frac{K^2}{\delta} +
C(H_\varepsilon,\delta)\right] \geq 1-\delta\label{eq:sample-complexity},
\end{align}
where $C(H_\varepsilon,\delta)$ is specified in \eqref{eq:detail-C} of Appendix~\ref{sec:proof-comp-sample} and satisfies
\[C(H_\varepsilon,\delta)  =\mathcal{O}\left(dH_\varepsilon\log\left(dH_\varepsilon \log\frac{1}{\delta}\right)\right).\]
Furthermore, if $\lambda > 4H_\varepsilon R^2L^2$, then
\begin{align}
\mathbb{P}\left[\tau \leq \left(8H_\varepsilon R^2\log\frac{K^2}{\delta}  + 4H_\varepsilon\lambda S^2+2K\right)\right]\geq 1-\delta. \label{eq:sample-complexity-high-dim}
\end{align}
\end{thm}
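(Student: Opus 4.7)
The plan is to bound $\tau$ on a high-probability concentration event by combining a lower bound on $\beta_t(j_t,i_t)$ at every non-stopping round $t$ with an upper bound on the same quantity coming from the ratio-based selection, and then to invert the resulting self-referential inequality in $\tau$. The two regimes of $\lambda$ differ only in how the final inversion is performed.

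First, I would define the good event $\mathcal{E}$ on which Proposition~\ref{prop:linear-confidence} holds simultaneously for every direction $y(i,j)$ with $i,j\in[K]$ and every round $n$. Applying the proposition at confidence $\delta/K^2$ to each of the $K^2$ directions and taking a union bound gives $\mathbb{P}(\mathcal{E})\ge 1-\delta$; this is legitimate because LinGapE's selections are functions of past observations. On $\mathcal{E}$ correctness is already secured by Theorem~\ref{thm:justification}, so it suffices to upper bound $\tau$ on $\mathcal{E}$. At any round $t<\tau$, the Select-direction procedure gives $\hat\Delta_t(j_t,i_t)+\beta_t(j_t,i_t)=B(t)>\varepsilon$ and $B(t)\ge\hat\Delta_t(a^*,i_t)+\beta_t(a^*,i_t)$; combining these two facts with the confidence inequalities applied to $y(j_t,i_t)$ and $y(a^*,i_t)$, and splitting cases on whether $i_t=a^*$ and whether $j_t=a^*$, I would derive $\beta_t(j_t,i_t)\ge\max\bigl(\varepsilon,(\varepsilon+\Delta_{i_t})/3,(\varepsilon+\Delta_{j_t})/3\bigr)$. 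The factor $\tfrac13$ comes from absorbing three separate confidence widths into a single inequality in the case $i_t\neq a^*\neq j_t$.

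Next, the selection rule \eqref{eq:arm-selection-2} forces $T_a(t)/p_a^*(y(i_t,j_t))\ge q_t:=\min_a T_a(t)/p_a^*(y(i_t,j_t))$ for every $a$, so in the PSD order $A_t\succeq q_t\sum_a p_a^*(y(i_t,j_t))\,x_ax_a^\top$. The LP \eqref{eq:w-problem} is exactly the $c$-optimal design LP for $c=y$, so by Elfving's theorem $y^\top\bigl(\sum_a p_a^*(y)x_ax_a^\top\bigr)^{-1}y=\rho(y)^2$, giving $\|y(i_t,j_t)\|_{A_t^{-1}}^2\le\rho(y(i_t,j_t))^2/q_t$. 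Evaluating at the last round $s_k$ in which arm $k$ is pulled, using $q_{s_k}=T_k(s_k)/p_k^*(y_{s_k})$, plugging into $\beta_{s_k}=C_{s_k}\|y_{s_k}\|_{A_{s_k}^{-1}}$, and combining with the lower bound from the previous step and the monotonicity $C_{s_k}\le C_\tau$ yields a per-arm bound of the form
\[T_k(\tau)\le 1+\max_{i,j\in[K]}\frac{C_\tau^2\,\rho(y(i,j))\,p_k^*(y(i,j))}{\max\bigl(\varepsilon,(\varepsilon+\Delta_i)/3,(\varepsilon+\Delta_j)/3\bigr)^2}.\]

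Summing over $k$ gives $\tau\le K+C_\tau^2\,H_\varepsilon$, into which I would substitute \eqref{eq:confidence-ellipsoid} for $C_\tau$. In the small-$\lambda$ regime the hypothesis $\lambda\le\frac{2R^2}{S^2}\log(K^2/\delta)$ lets $\lambda^{1/2}S$ be absorbed into $R\sqrt{2\log(K^2/\delta)}$, and AM-GM yields $\log\det(A_\tau^\lambda)-\log\det(\lambda I)\le d\log(1+\tau L^2/(d\lambda))$; the resulting inequality has the form $\tau\le a+b\log\tau$ and is inverted by the standard device $\tau\le 2(a+b\log(2b))$, producing both the leading term $8H_\varepsilon R^2\log(K^2/\delta)$ and the residual $C(H_\varepsilon,\delta)=\mathcal{O}(dH_\varepsilon\log(dH_\varepsilon\log(1/\delta)))$. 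In the large-$\lambda$ regime $\lambda>4H_\varepsilon R^2L^2$, the bound $\log(1+x)\le x$ turns the log-determinant into at most $\tau L^2/\lambda$, and the hypothesis on $\lambda$ forces the $\tau$-dependent contribution on the right-hand side to be at most $\tau/2$; rearranging to isolate $\tau$ gives \eqref{eq:sample-complexity-high-dim} without any $\log\tau$ inversion. The main obstacle I anticipate is the design-matrix step: the $c$-optimal characterization is cleanest for a fixed target direction, whereas $y(i_t,j_t)$ varies over rounds, so care is needed to verify that the PSD dominations accumulated from earlier rounds (aimed at different directions) remain useful for bounding $\|y_{s_k}\|_{A_{s_k}^{-1}}$ at round $s_k$; the bookkeeping for the UGapE-style case split that produces the $\tfrac13$ factor is routine but also error-prone.
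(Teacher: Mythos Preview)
Your plan is essentially the paper's proof: the paper packages your step~2 as Lemma~\ref{lem:bound-B}, your PSD/Elfving step as Lemma~\ref{lem:bound-of-matrix-norm}, arrives at the same key inequality $\tau\le H_\varepsilon C_\tau^2+K$, and carries out the two inversions as Lemmas~\ref{lem:complete-derivation} and~\ref{lem:complete-derivation-2}.

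The obstacle you flag at the end is not actually a difficulty. At the last round in which arm $k$ is selected (say the decision is based on time $t$ with $a_{t+1}=k$), the PSD comparison $A_t\succeq \lambda I + q_t\sum_a p_a^*(y(i_t,j_t))x_ax_a^\top$ only uses $T_a(t)\ge q_t\,p_a^*(y(i_t,j_t))$ for every $a$, which is immediate from the definition of $q_t$ as a minimum; it is completely indifferent to which directions were targeted when those $T_a(t)$ pulls were made. Nothing needs to be ``accumulated'' across rounds. Including $\lambda I$ on the right also sidesteps the singularity of $\sum_a p_a^*x_ax_a^\top$ when the optimal design is supported on fewer than $d$ arms, so you do not need to argue via pseudoinverses.

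One internal inconsistency to watch: Elfving's theorem gives $y^\top\bigl(\sum_a p_a^*(y)x_ax_a^\top\bigr)^{-1}y=\bigl(\sum_a|w_a^*|\bigr)^2$, i.e.\ $\rho(y)^2$ with $\rho$ as in \eqref{eq:rho-def}, yet in the next displayed per-arm bound you wrote $\rho$ to the first power. Carried through literally, your derivation yields $\rho^2$ there, not $\rho$. The paper's Lemma~\ref{lem:bound-of-matrix-norm} states $\|y\|_{A_t^{-1}}^2\le \rho(y)/T$ and its proof identifies $\rho(y)$ with $\sum_k (w_k^*)^2/p_k^*$, which equals $\bigl(\sum_k|w_k^*|\bigr)^2$; so be careful that whatever convention you adopt for $\rho$ matches the one in the definition \eqref{eq:problem-complexity} of $H_\varepsilon$ you are targeting.
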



The proof can be found in Appendix~\ref{sec:proof}. The theorem states that there are two types of sample complexity. The first bound \eqref{eq:sample-complexity} is more practically applicable, since the condition $\lambda \leq \frac{2R^2}{S^2}\log\frac{K^2}{\delta}$ can be checked by known parameters. On the other hand, we cannot ensure the condition $\lambda > 4H_\varepsilon R^2L^2$ is satisfied, since we cannot know $H_\varepsilon$ in advance. However, the second bound in \eqref{eq:sample-complexity-high-dim} can be tighter than the first one in \eqref{eq:sample-complexity} if there are only few directions that $\theta$ needs to be estimated accurately, where we have $H_\varepsilon \ll d$. In such a case, the additional term $4H_\varepsilon\lambda S^2+2K$ is much smaller than $C(H_\varepsilon,\delta)=\mathcal{O}\left(dH_\varepsilon\log\left(dH_\varepsilon \log\frac{1}{\delta}\right)\right)$, since $4H_\varepsilon\lambda S^2+2K = \mathcal{O}(H^2_\varepsilon)$ when $\lambda \simeq 4H_\varepsilon R^2L^2$.

\subsection{Discussion on Problem Complexity}

The problem complexity \eqref{eq:problem-complexity} has an interesting relation with that of the $\mathcal{XY}$-oracle allocation algorithm introduced by \citet{Soare2014}.
They considered the case where the agent knows true parameter $\theta$ when selecting an arm to pull,
and tries to \emph{confirm arm $a^*$ is actually the best arm}. Then, an efficient strategy is to let the sequence of arm selections {${\bf x}_n$} be
\begin{align}
 \argmin_{{\bf x}_n}\max_{i \in [K]\backslash \{a^*\}}\frac{\|y(a^*,i)\|_{A^{-1}_{{\bf x}_n}}}{\Delta_i}. \label{eq:xy-oracle-allocation}
\end{align}
An upper bound of the sample complexity of $\mathcal{XY}$-oracle allocation is proved to be $\mathcal{O}(H_{\text{oracle}}\log(1/\delta))$, where problem complexity $H_{\text{oracle}}$ is defined as
\[H_{\text{oracle}} = \max_{i \in [K]\backslash \{a^*\}}\frac{\rho(y(a^*,i))}{\Delta^2_i}.\]
This is expected to be close to the achievable lower bound of the problem complexity \citep{Soare2014}. Here, we prove a theorem that points out the relation between $H_\text{oracle}$ and our problem complexity $H_\varepsilon$.

\begin{thm}\label{thm:problem-complexity-bound}
Let $H_0$ be the problem complexity of LinGapE \eqref{eq:problem-complexity} when $\varepsilon$ is set as $\varepsilon=0$. Then, we have
\[H_0 \leq 72H'_{\mathrm{oracle}},\]
where $H'_{\mathrm{oracle}}$ is defined as
\[
H'_{\mathrm{oracle}}
= \sum_{i \in [K]\backslash \{a^*\}}\frac{\rho(y(a^*,i))}{\Delta^2_i}.\]
\end{thm}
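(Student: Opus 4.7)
The plan is to begin by substituting $p_k^*(y)\rho(y)=|w_k^*(y)|$ from \eqref{eq:p-def} and simplifying the denominator in the definition of $H_\varepsilon$. At $\varepsilon=0$, $\max(0,\Delta_i/3,\Delta_j/3)^2$ equals $\max(\Delta_i,\Delta_j)^2/9$ (with $\Delta_{a^*}$ understood as the minimum gap, which is dominated by the other $\Delta_j$ in every nontrivial pair), so the claim reduces to showing
\[\sum_{k=1}^K \max_{i,j\in[K]}\frac{|w_k^*(y(i,j))|}{\max(\Delta_i,\Delta_j)^2}\leq 8H'_{\mathrm{oracle}}.\]
The strategy is to relate decompositions of the ``indirect'' directions $y(i,j)$ to the ``direct'' ones $y(a^*,i)$ appearing in $H'_{\mathrm{oracle}}$.

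Two structural facts about the LP \eqref{eq:w-problem} form the backbone of the argument. First, since $y(i,j)=y(a^*,j)-y(a^*,i)$, the vector $w^*(y(a^*,j))-w^*(y(a^*,i))$ is a feasible decomposition of $y(i,j)$; taking $\ell_1$-norms yields the triangle-type inequality
\[\rho(y(i,j))\leq\rho(y(a^*,i))+\rho(y(a^*,j)).\]
Second, $|w_k^*(y)|\leq\rho(y)$ trivially, since $|w_k^*(y)|$ is a single summand of $\rho(y)=\sum_k|w_k^*(y)|$. Combining these with $\max(\Delta_i,\Delta_j)^{-2}\leq\min(\Delta_i^{-2},\Delta_j^{-2})$, each summand can be bounded by at most two terms of the form $\rho(y(a^*,\ell))/\Delta_\ell^2$, which are exactly the summands of $H'_{\mathrm{oracle}}$.

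To handle the outer sum over $k$, I would split the argmax pair $(i_k,j_k)$ according to whether $a^*\in\{i_k,j_k\}$. When it is, the ratio becomes $|w_k^*(y(a^*,j_k))|/\Delta_{j_k}^2$ directly, and summing over $k$ collapses via $\sum_k|w_k^*(y(a^*,j))|=\rho(y(a^*,j))$ into $H'_{\mathrm{oracle}}$ after enlarging to the full sum over $j\neq a^*$. When $a^*\notin\{i_k,j_k\}$, I would group the $k$'s by the value of $(i_k,j_k)$, apply the identity $\sum_k|w_k^*(y(i,j))|=\rho(y(i,j))$ within each group, and then invoke the triangle inequality on the resulting sum over distinct argmax pairs.

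The main obstacle is preventing the counting in the second case from accumulating an extra factor of $K$: a single summand $\rho(y(a^*,\ell))/\Delta_\ell^2$ of $H'_{\mathrm{oracle}}$ could in principle be charged by many different argmax pairs that happen to share coordinate $\ell$. Overcoming this requires using the LP identity $\sum_k|w_k^*(y(i,j))|=\rho(y(i,j))$ carefully to bundle the contributions of all $k$ whose argmax lands on the same pair before invoking the triangle bound, so that each pair is paid for only once. Tracking the multiplicative constants — the factor $9$ from the denominator rescaling, a factor $2$ from the triangle inequality, and an additional factor from the case split and the bundling step — is then expected to produce the stated constant $72$.
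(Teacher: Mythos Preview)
Your approach diverges from the paper's at the central step, and the divergence introduces a gap you have correctly identified but not closed.

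The paper never rewrites $p_k^*(y)\rho(y)$ as $|w_k^*(y)|$. It keeps the product intact and bounds only the factor $\rho(y(i,j))$: using the variational characterization of $\rho(y)$ as the optimal value of $\min_{p,w}\sum_k w_k^2/p_k$ subject to $y=\sum_k w_k x_k$ and $\sum_k p_k=1$ (problem~\eqref{eq:3}), the feasible point $p'_k=\tfrac12\bigl(p_k^*(y(i,a^*))+p_k^*(y(a^*,j))\bigr)$, $w'_k=w_k^*(y(i,a^*))+w_k^*(y(a^*,j))$ yields
\[
\rho(y(i,j))\le 4\bigl(\rho(y(i,a^*))+\rho(y(a^*,j))\bigr).
\]
With this factor-$4$ triangle inequality in hand, the bracket $\rho(y(a^*,i_k^*))/\Delta_{i_k^*}^2+\rho(y(a^*,j_k^*))/\Delta_{j_k^*}^2$ is bounded by $2H'_{\mathrm{oracle}}$ uniformly in $k$, and the remaining weight $p_k^*(y(i_k^*,j_k^*))$ is handled via $\sum_k p_k^*(y(i,j))=1$. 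The constant $9\cdot 4\cdot 2=72$ falls out directly; no case split on whether $a^*\in\{i_k,j_k\}$ and no bundling are used.

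Your case~1 ($a^*\in\{i_k,j_k\}$) does work: there the numerator is already $|w_k^*(y(a^*,j_k))|$, and grouping by $j_k$ reconstructs $\rho(y(a^*,j))$ exactly. The gap is in case~2. Once you pass from $|w_k^*(y(i,j))|$ to $\rho(y(i,j))$ you discard the only $k$-dependence in the numerator, and bundling by the argmax pair still leaves a sum over \emph{distinct pairs}, each contributing $\rho(y(a^*,i))/\Delta_i^2+\rho(y(a^*,j))/\Delta_j^2$. Nothing prevents a single index $\ell$ from being a coordinate of many such pairs, so the summand $\rho(y(a^*,\ell))/\Delta_\ell^2$ can be charged up to $K$ times. ``Each pair paid for once'' is not the same as ``each summand of $H'_{\mathrm{oracle}}$ paid for once''; your argument as written delivers $O(K)\,H'_{\mathrm{oracle}}$, not $72\,H'_{\mathrm{oracle}}$. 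The remedy is precisely what the paper does: keep $p_k^*$ as a separate weight rather than absorbing it into $|w_k^*|$, bound $\rho$ alone via the triangle inequality, and let the probabilities carry the sum over $k$.
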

The proof of the theorem can be found in Appendix~\ref{sec:proof-prob-complexity}. Since $H'_{\text{oracle}} \leq KH_{\text{oracle}}$, this result shows that our problem complexity matches the lower bound up to a factor of $K$, the number of arms.
Furthermore,
if $\Delta_i$ for some $i$ is very small compared with
$\{\Delta_{i'}\}_{i'\neq i}$, that is, if
there is only one near-optimal arm, then
$H_{\text{oracle}}$ becomes close to $H'_{\text{oracle}}$, and hence our problem complexity $H_0$ achieves the lower bound up to a constant factor.

\citet{Soare2014} claimed that $\mathcal{XY}$-adaptive allocation achieves this lower bound as well. To be precise, they discussed that the sample complexity of $\mathcal{XY}$-adaptive allocation is $\mathcal{O}(\max(M^*,N^*))$, where $N^*$ is the sample complexity of $\mathcal{XY}$-oracle allocation. Nevertheless, they did not give an explicit bound of $M^*$, which stems from the static strategy employed in each phase. Our experiments in Section~\ref{sec:experiment} show that $M^*$ can be as large as the sample complexity of $\mathcal{XY}$-static allocation,
the problem complexity of which is proved to be $\Omega(4d/\Delta^2_{a^*})$ and can be arbitrarily larger than $H_\mathrm{oracle}$ in the case of $d\to\infty$ \citep{Soare2014}.
Therefore, LinGapE is the first algorithm that always achieves the lower bound up to a factor of $K$.

We point out another interpretation of our problem complexity. If the features set $\mathcal{X}$ equals the set of canonical bases $(e_1,e_2,\dots,e_d)$, then the LB problem reduces to the ordinary MAB problem. In such a case, $p^*_k(y(i,j))$ and $\rho(y(i,j))$ are computed as 
\begin{align*}
p^*_k(y(i,j)) &= \begin{cases}
\frac12 &(k=i \mbox{ or } k=j),\\
0 &(\text{otherwise}),
\end{cases}\\
\rho(y(i,j)) &= 4.
\end{align*}
Therefore, if the noise variable is bounded in the interval $[-1,1]$, which is known as $1$-sub-Gaussian, the problem complexity becomes
\[H_\varepsilon = \sum_{k=1}^K \frac{2}{\max\left(\varepsilon,\frac{\varepsilon+\Delta_i}{3}\right)^2} \leq \frac{9}{8}H_\varepsilon^{\mathrm{UGapE}},\]
where $H_\varepsilon^{{\mathrm{UGapE}}}$ is the problem complexity of UGapE \citep{Gabillon2012}.  This fact suggests that 
LinGapE incorporates the linear structure into UGapE from the perspective of the problem complexity.

\section{Experiments}\label{sec:experiment}
In this section, we compare the performance of LinGapE with the algorithms proposed by \cite{Soare2014} through experiments in two synthetic settings and simulations based on real data.

\subsection{Experiment on Synthetic Data}\label{sec:experiment-synthetic}
We conduct experiments in two synthetic settings. One is the setting where an adaptive strategy is suitable, and the other is where pulling all arms uniformly becomes the optimal strategy. We set the noise distribution as $\varepsilon_t \sim \mathcal{N}(0,1)$ and run LinGapE with parameters of $\lambda = 1,\, \varepsilon = 0$ and $\delta = 0.05$ in both cases. We observed that altering the arm selection strategy in \eqref{eq:arm-selection-1} and in \eqref{eq:arm-selection-2} has very little impact on the performance,
and we plot the results only for the greedy strategy \eqref{eq:arm-selection-1}. We repeated experiments ten times for each experimental setting, the average of which is reported.

\subsubsection{Setting where an Adaptive Strategy is Suitable}
\label{sec:experiment-synthetic-1}
The first experiment is conducted on the setting where the adaptive strategy is favored, which is introduced by \citet{Soare2014}. We set up the LB problem with $d+1$ arms, where features consist of canonical bases $x_1 = e_1,\,\dots,\,x_d= e_d$ and an additional feature $x_{d+1} = (\cos(0.01),\sin(0.01),0,\dots,0)^\top$. The true parameter is set as $\theta = (2,0,\dots,0)^\top$ so that the expected reward of arm $d+1$ is very close to that of the best arm $a^*=1$ compared with other arms. Hence, the performance heavily depends on how much the agent can focus on comparing arms 1 and $d+1$.

Figure~\ref{fig:experiment} is a semi-log plot of the average stopping time of LinGapE, in comparison with the $\mathcal{XY}$-static allocation, $\mathcal{XY}$-adaptive allocation and $\mathcal{XY}$-oracle allocation
algorithms,
all of which are introduced by \citet{Soare2014}. The arm selection strategies of them are given in \eqref{eq:xy-static-allocation}, \eqref{eq:xy-adaptive-allocation} and \eqref{eq:xy-oracle-allocation}, respectively. The result indicates the superiority of LinGapE to the existing algorithms.
\begin{figure}[t]
\centerline{\includegraphics[width=\linewidth]{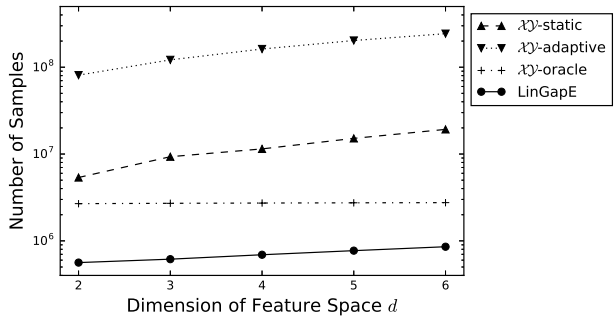}}
\caption{The number of samples required to estimate the best arm in the synthetic setting introduced by \citet{Soare2014}. }
\label{fig:experiment}
\end{figure}

This difference is due to the adaptive nature of LinGapE. While $\mathcal{XY}$-static allocation treats all directions $y\in\mathcal{Y}$ equally, LinGapE is able to identify the most important direction $y$ from the past observations and select arms based on it. Table~\ref{table:arm-selection}, which shows the number of times that each arm is pulled when $d=5$, supports this idea.
From this table, we can see that while $\mathcal{XY}$-static allocation pulls all arms equally, LinGapE and $\mathcal{XY}$-oracle allocation pull arm $2$ frequently. This is an efficient strategy for estimating the gap of expected rewards between arms $1$ and $d+1$, since the feature of arm $2$ is almost aligned with the direction of $x_1-x_{d+1}$. We can conclude that LinGapE is able to focus on discriminating arms $1$ and $d+1$, which reduces the total number of required samples significantly. 

\begin{table}[tb]
\caption{An example of arm selection when $d = 5$.}
    \begin{center}
    \begin{tabular}{|c|c|c|c|} \hline
      &$\mathcal{XY}$-static&LinGapE&$\mathcal{XY}$-oracle\\\hline
      Arm $1$&1298590&2133&13646\\\hline
      Arm $2$&2546604&428889&2728606\\\hline
      Arm $3$&2546666&19&68\\\hline
      Arm $4$&2546666&34&68\\\hline
      Arm $5$&2546666&33&68\\\hline
      Arm $6$&1273742&11&1\\\hline
    \end{tabular}
    \end{center}
    \label{table:arm-selection}
\end{table}

Although $\mathcal{XY}$-adaptive allocation has adaptive nature as well, it performs much worse than $\mathcal{XY}$-static allocation
in this setting.
This is due to the limitation that it has to reset the design matrix $A_{{\bf x}_n}$ at every phase. We observe that it actually succeeds to find $\hat{\mathcal{X}}_j = \{1,d+1\}$ in the first few phases. However, it
``forgets''
the discarded arms and gets $\hat{\mathcal{X}}_{j+1}=\mathcal{X}$ again. This is because the agent pulls only arms $1$, $2$ and $d+1$ at phase $j$ for estimating $(y(1,d+1))^\top\theta$, and the design matrix $A_{{\bf x}_n}$ constructed at the phase $j$ cannot discard other arms anymore. Therefore, the algorithm still handles all $\mathcal{X}$ at the last phase, which requires as many samples as $\mathcal{XY}$-static allocation. Hence, this is an example that the sample complexity of $\mathcal{XY}$-adaptive allocation matches that of $\mathcal{XY}$-static allocation. 
We observed that the same happened in the subsequent two experiments and $\mathcal{XY}$-adaptive performed at least five times worse than $\mathcal{XY}$-static allocation. Therefore, in order to highlight the difference between $\mathcal{XY}$-static allocation and LinGapE in linear scale plots, we do not plot the result for $\mathcal{XY}$-adaptive allocation in the following.

It is somewhat surprising
that LinGapE wins over $\mathcal{XY}$-oracle allocation, given that it
assumes
access to the true parameter $\theta$.
The main reason for this is that our confidence bound is tighter than that used in $\mathcal{XY}$-oracle allocation. This seems contradicting, since our confidence bound $\beta_t(i,j)$ is looser by a factor of $\sqrt{d}$ in the worst case where $\det(A_t) = \mathcal{O}(t^d)$ as discussed in Section~\ref{sec:confidence_bound}. Nevertheless, $\det(A_t)$ grows almost linearly with $t$ in our setting, since LinGapE mostly pulls the same arm as presented in  Table~\ref{table:arm-selection}, which significantly reduces the length of the confidence interval. This suggests the sample complexity given in Theorem~\ref{thm:sample-complexity} is actually loose, in which we bound $\det(A_t)$ by $\mathcal{O}(t^d)$ as well (see Prop.~\ref{prop:bound-of-confidence-interval} in Appendix~\ref{sec:proof-comp-sample}).

\subsubsection{Setting where a Static Strategy is Optimal} \label{sec:experiment-synthetic-2}
We conduct another experiment in synthetic setting, where $\mathcal{XY}$-static allocation is almost optimal. We consider the LB with $K=d=5$, where the feature set $\mathcal{X}$ equals the canonical set $(e_1,e_2,\dots,e_{5})$. We set the parameter $\theta$ as $\theta = (\Delta,0,\dots,0)^\top$, 
where $\Delta>0$, hence arm $1$ has a larger expected reward by $\Delta$ than all other arms. As $\Delta \to 0$, we need to estimate all arms equally accurately, therefore the optimal strategy is to pull all arms uniformly, which corresponds to $\mathcal{XY}$-static allocation. 

The result for various gaps $\Delta$ is shown in Figure~\ref{fig:experiment3}. We observe not only that LinGapE performs better than $\mathcal{XY}$-static allocation but also that the gap of the performance increases as $\Delta\to0$, where $\mathcal{XY}$-static allocation can be thought as the optimal strategy. A reason for this is that while $\mathcal{XY}$-static allocation always pulls arms uniformly until all arms satisfy the stopping condition, LinGapE quits pulling arms that is once turned out to be sub-optimal, which prevents LinGapE from observing unnecessary samples. This enhances the performance, especially in the case of $\Delta\to0$, where the number of samples needed for discriminating each arm is severely influenced by the noise.

\begin{figure}[t]
\centerline{\includegraphics[width=0.9\linewidth]{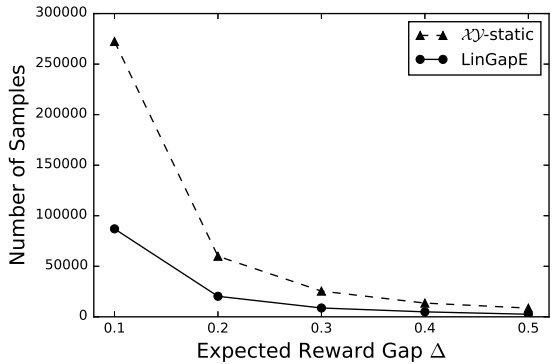}}
\caption{The number of samples required to estimate the best arm in the synthetic setting where the arm selection strategy in  $\mathcal{XY}$-static allocation is an almost optimal strategy. }
\label{fig:experiment3}
\end{figure}

\subsection{Simulation Based on Real Data} \label{sec:experiment-realistic}
We conduct another experiment based on a real-world dataset. We use Yahoo! Webscope Dataset R6A\footnote{\url{https://webscope.sandbox.yahoo.com/}},
which consists of
features of $36$-dimensions accompanied with binary outcomes. It is originally used as an unbiased evaluation benchmark for the LB aiming for cumulative reward maximization \citep{Li2010}, and we slightly change the situation so that it can be adopted for pure exploration setting. We construct the 36 dimensional feature set $\mathcal{X}$ by the random sampling from the dataset, and the reward is generated by
\[r_t = \begin{cases}
1 & \left(\text{w.p. }(1+x_{a_t}^\top\theta^*)/{2}\right),\\
-1 & (\text{otherwise}),
\end{cases}\]
where $\theta^*$ is the regularized least squared estimator
fitted
for the original dataset. Although $x_{a_t}^\top\theta^*$ is not necessarily bounded in $[-1,1]$, we observe that $x^\top\theta^*\in[-1,1]$ for all features $x$ in the dataset. Therefore, $(1+x_{a_t}^\top\theta^*)/2$ is always a valid probability
in this case.
We compare the performance with the $\mathcal{XY}$-static allocation algorithm, where the estimation is given by the regularized least squared estimator with $\lambda = 0.01$. The detailed procedure can be found in Appendix~\ref{sec:procedure}.

The average number of samples required in ten simulations is shown in Figure~\ref{fig:experiment2}, in which LinGapE performs roughly five times better than the $\mathcal{XY}$-static strategy, and the gap of performances increases as we consider more arms. This is because the $\mathcal{XY}$-static strategy tries to estimate all arms equally, while LinGapE is able to focus on estimating the best arm even if there are many arms.

\begin{figure}[t]
\centerline{\includegraphics[width=0.9\linewidth]{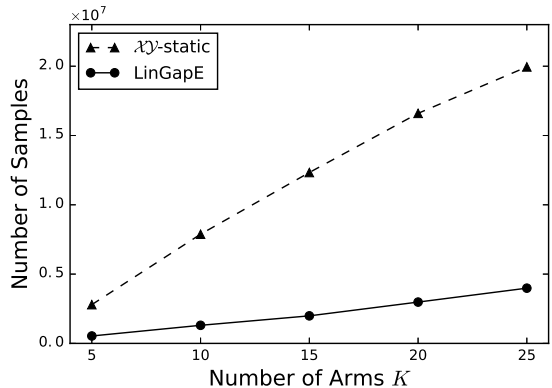}}
\caption{The number of samples required to estimate the best arm on Yahoo! Webscope Dataset R6A. }
\label{fig:experiment2}
\end{figure}

\section{Conclusions}
In this paper, we studied the pure exploration in the linear bandits.
We first reviewed a drawback in the existing work, and then introduced a novel fully adaptive algorithm, LinGapE.
We proved that the sample complexity of LinGapE matches the lower bound in an extreme case
and
confirmed
its superior performance in the experiments.
Since LinGapE is the first algorithm that achieves this lower bound, we would like to consider its various extensions and develop computationally efficient algorithms. In particular, pure exploration in the fixed budget setting is a promising direction of extension, since LinGapE is shares many ideas with UGapE, which is known to be applicable in the fixed budget setting as well \citep{Gabillon2012}.
Furthermore, as explained
in Section~\ref{sec:experiment-synthetic},
the derived sample complexity may be improved
since the evaluation of the determinant in Prop.~\ref{prop:bound-of-confidence-interval} given in Appendix~\ref{sec:proof-comp-sample} is loose.
The bound based on the tight evaluation of the determinant remains for the future work.

\subsubsection*{Acknowledgements}
LX utilized the facility provided by Masason Foundation. JH acknowledges support by KAKENHI 16H00881, and MS acknowledges support by KAKENHI 17H00757.

\subsubsection*{Reference}
\bibliography{references}
\newpage
\appendix
\section{Detailed Procedure of Simulation Based on Real-World Data}\label{sec:procedure}
In this appendix
we give the detailed procedure of the experiment presented in Section~\ref{sec:experiment-realistic}.  We use the Yahoo! Webscope dataset R6A, which consists of more than 45 million user visits to the Yahoo! Today module collected over 10 days in May 2009. The log describes the interaction (view/click) of each user with one randomly chosen article out of 271 articles. It was originally used as an unbiased evaluation benchmark for the LB in explore-exploration setting \citep{Li2010}. The dataset is made of features describes each user $u$ and each article $a$, both are expressed in 6 dimension feature vectors, accompanied with a binary outcome (clicked/not clicked).  We use article-user interaction feature $z_{a,u} \in \mathbb{R}^{36}$, which is expressed by a Kronecker product of a feature vector of article $a$ and that of $u$. \citet{chu2009} present a detailed description of the dataset, features and the collection methodology.

In our setting, we use the subset of the dataset which is collected on the one day
(May 1st).
We first conduct the regularized linear regression
on whether the target is clicked ($r_t=1$) or not clicked ($r_t=-1$).
Here, the regularize term is set as $0.01$.
Let $\theta^*$
be the learned parameter,
which we regard
as the
``true''
parameter in the simulation. We consider the LB with $K$ arms, the features of which are sampled from the dataset. We limit the the case of $\Delta_i \geq 0.05$ for all arms $i$ in order to make the problem not too hard. The reward $r_t$ at the $t$-th round is given by 
\[r_t = \begin{cases}
1 & \left(\text{w.p. }\frac{1+x_{a_t}^\top\theta^*}{2}\right)\\
-1 & (\text{otherwise})
\end{cases},\]
where $x_{a_t}$ is the feature of the arm selected at the $t$th round. Although it does not always the case, $x^\top\theta^*$ is happended to be bounded in $[-1,1]$ for all feature $x$ in the dataset, therefore $(1+x_{a_t}^\top\theta^*)/2$ is always valid for probability. Furthermore, since $x_{a_t}^\top\theta^* \in [-1,1]$, the noise variable
$\varepsilon_t$ is bounded as $\varepsilon_t \in [-2,2]$,
which is known as $2$-sub-Gaussian. We run LinGapE on this setting, where the parameter is fixed as $\varepsilon = 0$, $\delta = 0.05$,
and $\lambda = 1$,
in comparison with $\mathcal{XY}$-static allocation, where the estimation is given by regularized least squared estimator with $\lambda = 0.01$.

\section{Derivation of Ratio $p^*(y)$ }\label{sec:proof-p}
In this appendix, we present the derivation of $p_k^*(y(i,j))$ defined in \eqref{eq:p-def} and the proof of Lemma~\ref{lem:bound-of-matrix-norm}, which bounds the matrix norm when the arm selection strategy based on the ratio $p^*_k(y(i,j))$. 

The original problem of reducing the interval of confidence bound for given $y\in\mathcal{Y}$ is to obtain
\[\argmin_{{\bf x}_n} \|y\|_{(A^\lambda_{{\bf x}_n })^{-1}}\]
in the limit of $n\to\infty$.
Since we choose features from the finite set $\mathcal{X}$ in the LB, the problem becomes
\begin{align}
\min_{C_i \in \mathbb{N}\cup\{0\}} y^\top\left(\lambda I + \sum_{i=1}^K C_ix_ix^\top_i\right)^{-1}y \quad \mathrm{s.t.} \sum_{i=1}^K C_i = n. \label{eq:appendB-tmp1}
\end{align}
where the $C_i$ represents the number of times that the arm $i\in[K]$ is pulled before the $n$-th round.

We first conduct the continuous relaxation, which turns the optimization problem \eqref{eq:appendB-tmp1} into
\[\min_{p_i \geq 0} \frac1n y^\top\left(\frac{\lambda}{n} I + \sum_{i=1}^K p_ix_ix^\top_i\right)^{-1}y \quad \mathrm{s.t.} \sum_{i=1}^K p_i = 1,\]
where $p_i$ corresponds to the ratio $C_i /n$. Although this relaxed problem can be solved by convex optimization, it is not suited for the LB setting because the solution depends on the sample size $n$. Therefore, we consider the asymptotic case, where the sample size $n$ goes to infinity. 

It is proved \citep[Thm 3.2]{Yu2006} that the continuous relaxed problem is equivalent to 
\begin{align}
&\min_{p_i,w_i} \left\|y - \sum_{i=1}^K w_ix_i\right\|^2 + \frac{\lambda}{n} \sum_{i=1} \frac{w_i^2}{p_i} \notag\\
&\mathrm{s.t.} \sum_{i=1}^K p_i = 1,\, p_i \geq 0,\, p_i,w_i \in \mathbb{R}. \label{eq:2}
\end{align}
Since we consider $y\in\mathcal{Y}$, there always exists $w_i$ such that $y = \sum_{i=1}^K w_ix_i$.
Then, $\{w_i\}$ such that $\|y - \sum_{i=1}^K w_ix_i\| > 0$ cannot be the optimal solution
for sufficiently small $\lambda/n$ and thus
the optimal solution has to satisfy $\|y - \sum_{i=1}^K w_ix_i\| = 0$.
Therefore, the asymptotic case of \eqref{eq:2} corresponds to the problem
\begin{align}
\min_{p_i,w_i}\, & \sum_{i=1} \frac{w_i^2}{p_i} \notag\\
\mathrm{s.t.}\, &y = \sum_{i=1}^K w_ix_i\notag\\
&\sum_{i=1}^K p_i = 1,\, p_i\geq 0,\, w_i \in \mathbb{R},\label{eq:3}
\end{align}
the KKT condition of which yields the definition of $p^*$ in \eqref{eq:p-def}. 

If we employ the arm selection strategy in \eqref{eq:arm-selection-2} based on $p^*$ in \eqref{eq:p-def}, we can bound the matrix norm $\|y(i,j)\|_{A_t^{\-1}}$ as described in the following lemma. 
\begin{lem}\label{lem:bound-of-matrix-norm}
Recall that $\rho(y(i,j))$ and $p^*_k(y(i,j))$ are defined in \eqref{eq:rho-def} and \eqref{eq:p-def},
respectively. Let $T_i(t)$ be the number of times that the arm $i$ is pulled before the $t$-th round.
Then, the matrix norm $\|y(i,j)\|_{A_t^{-1}}$ is bounded by
\[\|y(i,j)\|_{A_t^{-1}} \leq \sqrt{\frac{\rho(y(i,j))}{\ratioT{t}{i}{j}}},\]
where
\[\ratioT{t}{i}{j} = \min_{\substack{k \in [K]:\\ p^*_k(y(i,j))>0}} T_k(t)/p^*_k(y(i,j)). \]
\end{lem}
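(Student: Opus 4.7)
The plan is to combine the variational form of the matrix norm with a Cauchy–Schwarz bound tailored to the $\ell_1$-minimal representation $y(i,j) = \sum_k w_k^*(y(i,j))\,x_k$ coming from the linear program \eqref{eq:w-problem}, and then to use the arm-selection rule \eqref{eq:arm-selection-2} to turn the per-arm counts $T_k(t)$ into a lower bound on $v^\top A_t v$ that matches the Cauchy–Schwarz upper bound on $(y^\top v)^2$.

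First I would write
\[
\|y(i,j)\|_{A_t^{-1}}^2 = \max_{v\neq 0}\frac{(y^\top v)^2}{v^\top A_t v},
\]
and arrange the bounds on the numerator and denominator so that they share a common quadratic form in $v$ which then cancels. Substituting $y = \sum_k w_k^* x_k$ and applying Cauchy–Schwarz with weights $p_k^*$ yields
\[
(y^\top v)^2 = \Bigl(\sum_{k:\,p_k^*>0} \tfrac{w_k^*}{\sqrt{p_k^*}}\cdot \sqrt{p_k^*}(x_k^\top v)\Bigr)^2 \le \Bigl(\sum_k \tfrac{(w_k^*)^2}{p_k^*}\Bigr)\Bigl(\sum_k p_k^* (x_k^\top v)^2\Bigr),
\]
and the identity $p_k^* = |w_k^*|/\rho(y)$ from \eqref{eq:p-def} collapses the first factor to a clean expression in $\rho(y)$.

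For the denominator I would discard the regularizer via $A_t \succeq \sum_k T_k(t)\,x_k x_k^\top$ and invoke the arm-selection rule. Since \eqref{eq:arm-selection-2} is $a_{t+1} = \argmin_{a:\,p_a^*>0} T_a(t)/p_a^*(y(i_t,j_t))$, the defining minimum $\ratioT{t}{i}{j} = \min_{k:\,p_k^*>0} T_k(t)/p_k^*$ directly gives $T_k(t) \ge p_k^*\,\ratioT{t}{i}{j}$ for every $k$ in the support of $p^*$, and hence
\[
v^\top A_t v \ge \ratioT{t}{i}{j}\sum_k p_k^*(x_k^\top v)^2.
\]
Taking the ratio, the common weighted quadratic $\sum_k p_k^*(x_k^\top v)^2$ cancels uniformly in $v$, leaving the claimed bound on $\|y\|_{A_t^{-1}}^2$ after taking the supremum and a square root.

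The main delicacy is to choose the Cauchy–Schwarz weighting to be exactly $p_k^*$: any other weighting leaves a $v$-dependent residual factor that blocks cancellation with the denominator. A minor but necessary caveat is to restrict both sums to the support $\{k:p_k^*>0\}$; this is harmless because only those indices appear in any $\ell_1$-minimal decomposition of $y$, and the strategy, together with the definition of $\ratioT{t}{i}{j}$, only references $k$ in that same support. After the initialization phase of Algorithm~\ref{LinGapE} every arm has $T_k \ge 1$, so $A_t$ is invertible and all steps above are well defined.
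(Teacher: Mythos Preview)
Your argument is correct and takes a genuinely different, more elementary route than the paper. Both proofs rest on the same monotonicity fact $T_k(t)\ge p_k^*(y)\,\ratioT{t}{i}{j}$ for every $k$ in the support of $p^*$, which follows from the definition of $\ratioT{t}{i}{j}$ alone (no appeal to the selection rule \eqref{eq:arm-selection-2} is actually needed). From there the paper passes to $\tilde A_t=\lambda I+\ratioT{t}{i}{j}\sum_k p_k^* x_kx_k^\top$ via the Sherman--Morrison monotonicity lemma (Lemma~\ref{lem:matrix-inequality}) and then invokes the KKT conditions of the auxiliary problem \eqref{eq:3} to obtain $y^\top\tilde A_t^{-1}y\le \rho(y)/\ratioT{t}{i}{j}$ by a direct matrix computation. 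You bypass both ingredients: the variational identity $\|y\|_{A_t^{-1}}^2=\sup_{v\neq 0}(y^\top v)^2/(v^\top A_t v)$ turns the monotonicity into a lower bound on the denominator, and Cauchy--Schwarz with the weights $p_k^*$ controls the numerator, so the common factor $\sum_k p_k^*(x_k^\top v)^2$ cancels uniformly in $v$. The paper's route makes explicit that the only slack is the regularizer $\lambda I$; yours is shorter and needs no Lagrangian analysis. One bookkeeping point to make precise: your first Cauchy--Schwarz factor evaluates to $\sum_k (w_k^*)^2/p_k^*=\bigl(\sum_k|w_k^*|\bigr)^2$, which is exactly the quantity the paper's own proof (and its MAB special case with $\rho=4$) identifies with $\rho(y)$; read \eqref{eq:rho-def} accordingly and your bound matches the lemma exactly.
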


This lemma is proved by the following lemma.
\begin{lem}\label{lem:matrix-inequality}
Let $A$ be
a positive definite matrix
in $\mathbb{R}^{d\times d}$ and $x,y$ be vectors in $\mathbb{R}^{d}$.
Then,
for any constant $\alpha > 0$, 
\[y^\top(A+\alpha xx^\top)^{-1}y \leq y^\top A^{-1}y\]
holds.
\end{lem}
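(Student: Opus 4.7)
The plan is to use the Sherman--Morrison rank-one update formula, which gives an explicit expression for $(A+\alpha xx^\top)^{-1}$ in terms of $A^{-1}$, and then observe that the correction term has a definite sign. Concretely, since $A$ is positive definite, $A^{-1}$ exists and $x^\top A^{-1}x\ge 0$, so the scalar $1+\alpha x^\top A^{-1}x$ is strictly positive and the Sherman--Morrison identity applies, yielding
\[
(A+\alpha xx^\top)^{-1} = A^{-1} - \frac{\alpha\,A^{-1}xx^\top A^{-1}}{1+\alpha x^\top A^{-1}x}.
\]
Sandwiching this identity between $y^\top$ and $y$ produces
\[
y^\top(A+\alpha xx^\top)^{-1}y \;=\; y^\top A^{-1} y \;-\; \frac{\alpha\,(y^\top A^{-1}x)^2}{1+\alpha x^\top A^{-1}x},
\]
and the subtracted term is manifestly nonnegative because its numerator is $\alpha>0$ times a square and its denominator is positive. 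The desired inequality follows immediately.

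An alternative route I would at most mention in one line is via operator-monotonicity of $M\mapsto -M^{-1}$ on the positive-definite cone: from $\alpha xx^\top\succeq 0$ we obtain $A+\alpha xx^\top\succeq A$, hence $(A+\alpha xx^\top)^{-1}\preceq A^{-1}$, and evaluating both sides at $y$ concludes the argument. I would prefer the Sherman--Morrison derivation as the primary proof, since it is self-contained and gives the exact gap between the two quadratic forms, which may be useful elsewhere. There is no real obstacle here: the only point requiring care is verifying that $1+\alpha x^\top A^{-1}x>0$ so that the inverse formula is well-defined, which is immediate from $\alpha>0$ and positive-definiteness of $A^{-1}$.
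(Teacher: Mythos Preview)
Your proof is correct and follows essentially the same route as the paper: both apply the Sherman--Morrison formula to write $(A+\alpha xx^\top)^{-1}=A^{-1}-\frac{\alpha A^{-1}xx^\top A^{-1}}{1+\alpha x^\top A^{-1}x}$ and then observe that the correction term contributes a nonnegative quantity when sandwiched between $y^\top$ and $y$. Your explicit check that $1+\alpha x^\top A^{-1}x>0$ and your remark on the operator-monotonicity alternative are nice additions but do not change the underlying argument.
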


\begin{proof}
By Sherman{-}Morrison formula \citep{Sherman1950} we have, 
\begin{align*}
y^\top(A+\alpha xx^\top)^{-1}y &= y^\top\left(A^{-1} - \frac{\alpha A^{-1}xx^\top A^{-1}}{1+\alpha x^TA^{-1}x}\right)y\\
&= y^\top A^{-1}y - y^\top\frac{\alpha A^{-1}xx^\top A^{-1}}{1+\alpha x^TA^{-1}x}y\\
&\leq y^\top A^{-1}y.
\end{align*}
The last inequality follows from the fact that $A^{-1}$ is positive definite.
\end{proof}

Using Lemma~\ref{lem:matrix-inequality}, we can prove Lemma~\ref{lem:bound-of-matrix-norm} as follows.

\begin{proof}[Proof of Lemma~\ref{lem:bound-of-matrix-norm}]
By the definition of $A_t$, we have
\[A_t = \lambda I + \sum_{k=1}^K T_k(t)x_kx_k^\top.\]
Then, for
\[\tilde A_t = \lambda I + \sum_{k=1}^K p^*_k(y(i,j))\ratioT{t}{i}{j}x_kx_k^\top,\]
we have
\[(y(i,j))^\top A^{-1}_t y(i,j) \leq (y(i,j))^\top \tilde A^{-1}_t y(i,j)\]
from Lemma~\ref{lem:matrix-inequality} and the fact
\[T_k(t) \leq p^*_k(y(i,j))\ratioT{t}{i}{j},\]
which can be inferred from the definition of $T_t(i,j)$. Therefore, proving 
\[(y(i,j))^\top \tilde A^{-1}_t y(i,j) \leq \frac{\rho(y(i,j))}{\ratioT{t}{i}{j}}\]
completes the proof of the lemma.

For convenience, we write $y(i,j)$ as $y$.
The KKT
condition of \eqref{eq:3} implies
that $w_i^*(y)$ and $p_i^*(y)$ satisfy
\begin{align*}
w^*_i(y) &= \frac12 p^*_i(y)x_i^\top\alpha\\
y &= \frac12 \sum_{i=1}^K p^*_i(y)x_ix_i^\top \alpha,
\end{align*}
where $\alpha \in \mathbb{R}^d$ corresponds to the Lagrange multiplier. Therefore, the optimal value $\rho(x)$ can be written as
\[\rho(y) = \sum_{i=1}^K \frac{{w^*}^2_i(y)}{p^*_i(y)} = \frac14\alpha^\top\left(\sum_{i=1}^K p^*_i(y) x_ix_i^\top\right) \alpha.\]
Now, let $B$ be denoted as  
\[B = \left(\sum_{i=1}^K p^*_i(y) x_ix_i^\top\right).\]
Then, since $y = \frac12 B\alpha$, we have
\begin{align*}
y^\top\tilde A^{-1}_t y - \frac{\rho(y)}{\ratioT{t}{i}{j}}&= \frac14 \alpha^\top B^\top \tilde A^{-1}_tB\alpha - \frac1{4\ratioT{t}{i}{j}} \alpha^\top B\alpha\\
&=\frac{1}{4}\alpha^\top \left(B^\top-\frac{\tilde A_t}{\ratioT{t}{i}{j}}\right) \tilde A^{-1}_tB\alpha\\
&=-\frac{1}{4}\alpha^\top \frac{\lambda}{\ratioT{t}{i}{j}} \tilde A^{-1}_tB\alpha\\
&\leq 0.
\end{align*}
The inequality follows from the fact that both of $\tilde A^{-1}_t$ and $B$ are positive semi-definite matrices. 
\end{proof}

\section{Proofs of Theorems}\label{sec:proof}
In this appendix,
we give the proofs of
Theorems~\ref{thm:justification} and \ref{thm:sample-complexity},
which are the main
theoretical contribution of this paper.
In the proof, we assume that the event $\mathcal{E}$ defined as
\[\mathcal{E} = \{\forall t>0,\,\forall i,j \in [K],\, |\Delta(i,j)-\hat{\Delta}_t(i,j)|\leq \beta_t(i,j)\}\]
occurs, where $\Delta(i,j)  = (x_i-x_j)^\top\theta$ is the gap of expected rewards between
arms $i$ and $j$.
The following lemma states that this assumption holds with high probability.
\begin{lem}
Event $\mathcal{E}$ holds w.p. at least  $1-\delta$.
\label{lem:event}
\end{lem}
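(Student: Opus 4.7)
The plan is to invoke Proposition~\ref{prop:linear-confidence} separately for each pair of arms $(i,j)$ and then union-bound over the at most $K^2$ ordered pairs. The hypothesis of Proposition~\ref{prop:linear-confidence} is satisfied because LinGapE chooses $a_{t+1}$ as a (deterministic) function of the past observations only, so the regularized least-squares confidence ellipsoid is applicable at every round $t$. The key feature of Proposition~\ref{prop:linear-confidence} I will exploit is that, for a fixed direction $x$, it already yields a bound that is uniform in $t$; I only need to pay a union-bound cost across different directions, not across different rounds.

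First I would fix an arbitrary pair $(i,j)\in[K]^2$ and apply Proposition~\ref{prop:linear-confidence} with $x = y(i,j) = x_i - x_j$ and with the confidence parameter set to $\delta/K^2$ in place of $\delta$. This yields, with probability at least $1-\delta/K^2$, the uniform-in-$t$ estimate
\[
\bigl|y(i,j)^{\top}(\hat{\theta}_t^{\lambda} - \theta)\bigr| \le \|y(i,j)\|_{(A_t^{\lambda})^{-1}}\, C_t,
\]
which by the definitions $\hat{\Delta}_t(i,j) = (x_i - x_j)^{\top}\hat{\theta}_t^{\lambda}$ and $\beta_t(i,j) = \|y(i,j)\|_{A_t^{-1}} C_t$ is exactly $|\Delta(i,j) - \hat{\Delta}_t(i,j)| \le \beta_t(i,j)$ for every $t > 0$. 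Here $C_t$ is the quantity in \eqref{eq:confidence-ellipsoid} with $\delta$ replaced by $\delta/K^2$; this implicit rescaling is precisely what produces the $\log(K^2/\delta)$ term that will appear in the sample-complexity bound of Theorem~\ref{thm:sample-complexity}.

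Finally, a union bound over the $K^2$ ordered pairs in $[K]\times[K]$ controls the probability that $\mathcal{E}$ fails (i.e.\ that some pair and some round violate the bound) by $K^2 \cdot (\delta/K^2) = \delta$, which gives $\mathbb{P}(\mathcal{E}) \ge 1 - \delta$ as required. There is no genuine mathematical obstacle in this argument; the only point that requires care is the bookkeeping of the confidence parameter inside $C_t$, so that the $\delta/K^2$ used for each per-pair application of Proposition~\ref{prop:linear-confidence} is the same $C_t$ that is plugged into $\beta_t(i,j)$ throughout the algorithm and the subsequent analysis.
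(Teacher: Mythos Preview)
Your proposal is correct and follows essentially the same approach as the paper, which simply states that the lemma follows by combining Proposition~\ref{prop:linear-confidence} with a union bound. You have in fact been more explicit than the paper about the bookkeeping: your observation that each pair $(i,j)$ should be handled at confidence level $\delta/K^2$ is exactly what produces the $K^2/\delta$ inside the logarithm in the version of $C_t$ used later in Lemma~\ref{lem:complete-derivation}.
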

Combining Prop.~\ref{prop:linear-confidence} and union bounds proves this lemma. 

\subsection{Proof of Theorem~\ref{thm:justification}}
Let $\tau$ be the stopping round of LinGapE. If $\Delta(a^*,\hat{a}^*) > \varepsilon$ holds, that is the returned arm $\hat{a}^*$ is worse than the best arm $a^*$ by $\varepsilon$, then we have
\begin{align*}
\Delta(a^*,\hat{a}^*)&> \varepsilon\geq B(\tau)\geq \hat{\Delta}_\tau(a^*,\hat{a}^*) + \beta_\tau(a^*,\hat{a}^*).
\end{align*}
The second inequality holds for stopping condition $B(\tau)\leq\varepsilon$ and the last follows from the definition of $B(\tau)$ (Line \ref{line:B-def} in Algorithm \ref{Select-Direction}). From this inequality, we can see that $\Delta(a^*,\hat{a}^*) > \varepsilon$ means that event $\mathcal{E}$ does not occur. Thus, the probability that LinGapE returns such arms is
\begin{align*}
\mathbb{P}[\Delta(a^*,\hat{a}_\tau) > \varepsilon] \leq \mathbb{P}[\bar{\mathcal{E}}] = 1 - \mathbb{P}[\mathcal{E}] \leq \delta,
\end{align*}
where $\bar{\mathcal{E}}$ represents the complement of the event $\mathcal{E}$. The last inequality follows from Lemma \ref{lem:event}. Therefore, we can conclude that the returned arm satisfies the condition \eqref{eq:stop}. 
\qed

\subsection{Proof of Theorem~\ref{thm:sample-complexity}}
We prove Theorem~\ref{thm:sample-complexity} by combining Lemma~\ref{lem:bound-of-matrix-norm} with following lemma.
\begin{lem}
Under event $\mathcal{E}$, $B(t)$ is bounded as follows. If $i_t$ or $j_t$ is the best arm, then
\[B(t) \leq \min(0,-(\Delta_{i_t}\vee \Delta_{j_t})+\beta_t(i_t,j_t)) +\beta_t(i_t,j_t). \]
Otherwise, we have
\[B(t) \leq \min(0,-(\Delta_{i_t}\vee \Delta_{j_t})+2\beta_t(i_t,j_t)) +\beta_t(i_t,j_t),\]
where
$a\vee b=\max(a,b)$.
\label{lem:bound-B}
\end{lem}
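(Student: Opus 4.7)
The plan is to exploit the decomposition $\min(0, x) + \beta_t(i_t, j_t) = \min(\beta_t(i_t, j_t),\, x + \beta_t(i_t, j_t))$, which shows that the asserted bound is equivalent to the conjunction of two inequalities: (i) the unconditional bound $B(t) \leq \beta_t(i_t, j_t)$, and (ii) a ``signed'' bound $B(t) \leq -(\Delta_{i_t} \vee \Delta_{j_t}) + c\,\beta_t(i_t, j_t)$ with $c = 2$ when one of $i_t,j_t$ is $a^*$ and $c = 3$ otherwise. I would prove each in turn.

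Inequality (i) follows immediately from the definition of $i_t$: since $i_t$ is the empirical argmax of $x^\top \hat{\theta}_t^\lambda$, we have $\hat{\Delta}_t(j_t, i_t) \leq 0$ and hence $B(t) = \hat{\Delta}_t(j_t, i_t) + \beta_t(j_t, i_t) \leq \beta_t(i_t, j_t)$. Two additional ingredients prepare for (ii). First, event $\mathcal{E}$ gives $\hat{\Delta}_t(j_t, i_t) \leq \Delta(j_t, i_t) + \beta_t(i_t, j_t)$, and hence $B(t) \leq \Delta(j_t, i_t) + 2\beta_t(i_t, j_t)$. Second, combining $\hat{\Delta}_t(a^*, i_t) \leq 0$ with the lower-tail inequality $\hat{\Delta}_t(a^*, i_t) \geq \Delta_{i_t} - \beta_t(a^*, i_t)$ from $\mathcal{E}$ yields $\Delta_{i_t} \leq \beta_t(a^*, i_t)$.

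For (ii) I would split by which of $i_t, j_t$ equals $a^*$. If $i_t = a^*$, then $\Delta(j_t, i_t) = -\Delta_{j_t}$ and $\Delta_{j_t} \geq \Delta_{a^*} = \Delta_{i_t}$, so the first ingredient produces (ii) directly with $c = 2$. If $j_t = a^*$, the direct evaluation gives $\Delta(j_t, i_t) = +\Delta_{i_t}$, which is useless for (ii); instead I would use $\beta_t(a^*, i_t) = \beta_t(i_t, j_t)$ in the second ingredient to conclude $\Delta_{i_t} \leq \beta_t(i_t, j_t)$, whence $-(\Delta_{i_t} \vee \Delta_{j_t}) + 2\beta_t(i_t, j_t) = -\Delta_{i_t} + 2\beta_t(i_t, j_t) \geq \beta_t(i_t, j_t) \geq B(t)$ by (i). Finally, when neither $i_t$ nor $j_t$ equals $a^*$, one has $\Delta(j_t, i_t) = \Delta_{i_t} - \Delta_{j_t}$; the maximality of $j_t$ compared against the choice $j = a^*$, together with the second ingredient, yields $B(t) \geq \Delta_{i_t}$, which with (i) gives $\Delta_{i_t} \leq \beta_t(i_t, j_t)$. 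Substituting this into the first ingredient gives $B(t) \leq -\Delta_{j_t} + 3\beta_t(i_t, j_t)$, while (i) alone already gives $B(t) \leq \beta_t(i_t, j_t) \leq -\Delta_{i_t} + 3\beta_t(i_t, j_t)$; selecting whichever is relevant according to $\Delta_{i_t} \vee \Delta_{j_t}$ settles (ii) with $c = 3$.

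The main obstacle is the sub-case $j_t = a^*$ of Case~1: the signed bound coming from $\mathcal{E}$ alone does not negate $\Delta_{i_t}$, so one has to borrow the argmax property of $i_t$ to prove separately that $\Delta_{i_t}$ is itself small, specifically dominated by $\beta_t(i_t, j_t)$. The same trick reappears in Case~2 and is exactly what costs the extra $\beta_t$ factor there, accounting for the constant $3$ versus $2$ in the statement.
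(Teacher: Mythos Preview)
Your proposal is correct and follows essentially the same route as the paper: the same decomposition of the $\min$ into inequalities (i) and (ii), the same case split on whether $i_t$ or $j_t$ equals $a^*$, and the same key device of bounding $\Delta_{i_t}$ by $\beta_t(i_t,j_t)$ via the argmax property of $i_t$ (your ``second ingredient'' is the paper's inequality \eqref{eq:lem2-3}). The only cosmetic difference is that in the sub-case $j_t=a^*$ the paper chains $B(t)\leq\beta_t\leq -\hat\Delta_t(j_t,i_t)+\beta_t\leq -\Delta(j_t,i_t)+2\beta_t$ directly, whereas you first isolate $\Delta_{i_t}\leq\beta_t(i_t,j_t)$ and then add it back; these are rearrangements of the same inequalities.
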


\begin{proof}
First, we consider the case where either arm $i_t$ or $j_t$ is
the best arm $a^*$. Since arm $i_t$ is the estimated best arm (Line~\ref{line:i-def} in Algorithm~\ref{Select-Direction}), we have 
\begin{align}
\hat{\Delta}_t(j_t,i_t) = (x_{j_t} -x_{i_t})^\top \theta_t^\lambda \leq 0.\label{eq:lem2-2}
\end{align}
Thus, $B(t)$ is bounded by
\begin{align}
B(t) = \hat{\Delta}(j_t,i_t) + \beta_{t}(i_t,j_t) \leq \beta_{t}(i_t,j_t). \label{eq:lem2-1}
\end{align}
Therefore, it is sufficient to show 
\begin{align}
B(t) \leq -(\Delta_{i_t}\vee\Delta_{j_t}) + 2\beta_t(i_t,j_t).\label{eq:lem2-target1}
\end{align}
If $i_t = a^*$, then 
\begin{align}
\left(\Delta_{i_t}\vee\Delta_{j_t}\right) = \Delta_{j_t} \label{eq:lem2-tmp1}
\end{align}
follows from the definition of $\Delta_a$ in \eqref{eq:delta-def}. In this case, $B(t)$ is bounded as 
\begin{align*}
B(t) &\overset{\text{(a)}}{=} \hat\Delta_t(j_t,i_t) + \beta_t(i_t,j_t)\\
&\overset{\text{(b)}}{\leq} \Delta(j_t,i_t) + 2\beta_t(i_t,j_t)\\
&\overset{\text{(c)}}{=} -\Delta_{j_t} + 2\beta_t(i_t,j_t)\\
&\overset{\text{(d)}}{=} -(\Delta_{i_t}\vee\Delta_{j_t}) + 2\beta_t(i_t,j_t),
\end{align*}
where (a), (b), (c) and (d) follow from the definition of $B(t)$, event $\mathcal{E}$, definition of $\Delta_a$ and \eqref{eq:delta-def}, respectively.

On the other hand, in the case where $j_t = a^*$, we have
\begin{align}
\left(\Delta_{i_t}\vee\Delta_{j_t}\right) = \Delta_{i_t} \label{eq:lem2-tmp2}.
\end{align}
In this case, the upper bound of $B(t)$ is derived as
\begin{align*}
B(t) &\overset{\text{(a)}}{\leq} \beta_t(i_t,j_t)\\
&\overset{\text{(b)}}{\leq} -\hat{\Delta}_t(j_t,i_t)+\beta_t(i_t,j_t)\\
&\overset{\text{(c)}}{\leq} -\Delta(j_t,i_t) + 2\beta_t(i_t,j_t)\\
&\overset{\text{(d)}}{=} -(\Delta_{i_t}\vee\Delta_{j_t}) + 2\beta_t(i_t,j_t),
\end{align*}
where (a), (b), (c) and (d) follow from \eqref{eq:lem2-1}, \eqref{eq:lem2-2}, event $\mathcal{E}$, and \eqref{eq:lem2-tmp2}, respectively.

Therefore, in both cases,
\eqref{eq:lem2-target1} holds, which completes the proof of the first inequality in Lemma~\ref{lem:bound-B}.

Next, we prove the second inequality, which holds when neither $i_t \neq a^*$ nor $j_t \neq a^*$. Again, with \eqref{eq:lem2-1}, it is sufficient to prove
\begin{align}
B(t) \leq -(\Delta_{i_t}\vee\Delta_{j_t}) + 3\beta_t(i_t,j_t). \label{eq:lem2-target2}
\end{align}

Since $j_t \neq a^*$,  
\begin{align}
\hat\Delta_t(a^*,i_t) + \beta_t(a^*,i_t) \leq \hat\Delta_t(j_t,i_t) + \beta_t(j_t,i_t). \label{eq:lem2-tmp3}
\end{align}
follows from the definition of $j_t$ (Line~\ref{line:j-def} in Algorithm~\ref{Select-Direction}). Thus, we have
\begin{align}
\beta_t(i_t,j_t) &\overset{\text{(a)}}{\geq} \hat\Delta_t(j_t,i_t) + \beta_t(j_t,i_t)\notag\\
&\overset{\text{(b)}}{\geq} \hat\Delta_t(a^*,i_t) + \beta_t(a^*,i_t)\notag\\
&\overset{\text{(c)}}{\geq} \Delta(a^*,i_t), \label{eq:lem2-3}
\end{align}
where (a), (b) and (c) follow from \eqref{eq:lem2-2}, \eqref{eq:lem2-tmp3}, event $\mathcal{E}$, respectively. By using \eqref{eq:lem2-3} and event $\mathcal{E}$,
we have
\begin{align}
B(t) &= \hat\Delta_t(j_t,i_t) + \beta_t(i_t,j_t)\notag\\
&\leq \Delta(j_t,i_t) + 2\beta_t(i_t,j_t)\notag\\
&= \Delta(j_t,a^*) + \Delta(a^*,i_t) + 2\beta_t(i_t,j_t)\notag\\
&\leq -\Delta_{j_t} + 3\beta_t(i_t,j_t).\label{eq:lem2-tmp4}
\end{align}
Moreover, from \eqref{eq:lem2-1} and \eqref{eq:lem2-3},
we obtain
\begin{align}
B(t) \leq 2\beta_t(i_t,j_t) \leq -\Delta_{i_t} + 3\beta_t(i_t,j_t).\label{eq:lem2-tmp5}
\end{align}
Combining \eqref{eq:lem2-tmp4} and \eqref{eq:lem2-tmp5} yields \eqref{eq:lem2-target2}, which was what we wanted.
\end{proof}

Based on Lemmas~\ref{lem:bound-of-matrix-norm} and \ref{lem:bound-B}, Theorem~\ref{thm:sample-complexity} is proved as follows.

\begin{proof}[Proof of Theorem~\ref{thm:sample-complexity}]
From Lemma~\ref{lem:event}, it suffices to show \eqref{eq:sample-complexity} and \eqref{eq:sample-complexity-high-dim} holds 
in the case where event $\mathcal{E}$ occurs.
First we derive the upper bound of $T_k(\tau)$. Let $\tilde t\leq \tau$ be the last round that arm $k$ is pulled. Then, 
\begin{align*}&\min(0,-\Delta_k+2\beta_{\tilde t-1}(i_{\tilde t-1},j_{\tilde t-1})) +\beta_{\tilde t-1}(i_{\tilde t-1},j_{\tilde t-1}))\\
&\qquad\geq B(\tilde t-1) \geq \varepsilon
\end{align*}
follows from Lemma~\ref{lem:bound-B} and the fact that stopping condition is not satisfied
at the $\tilde t$-th round.
Applying Lemma~\ref{lem:bound-of-matrix-norm} yields
\[\ratioT{\tilde t-1}{i_{\tilde t-1}}{j_{\tilde t-1}} \leq \frac{\rho(y(i_{\tilde t-1},j_{\tilde t-1}))}{\max\left(\varepsilon, \frac{\varepsilon+\Delta_{i_{\tilde t-1}}}{3},\frac{\varepsilon+\Delta_{j_{\tilde t-1}}}{3}\right)^2}C^2_{\tilde t-1},\]
where $C_t$ is defined in \eqref{eq:confidence-ellipsoid}.
Now, since arm $k$
is pulled at $\tilde t$-th round,
\[T_k(\tilde t-1) = p^*_k(y(i_{\tilde t-1},j_{\tilde t-1}))\ratioT{\tilde t-1}{i_{\tilde t-1}}{j_{\tilde t-1}}\] holds by definition. Therefore, $T_k(\tau)$ can be bounded as
\begin{align*}
T_k(\tau) &=  T_k(\tilde t-1) + 1\\
&=  p^*_k(y(i_{\tilde t-1},j_{\tilde t-1}))\ratioT{\tilde t-1}{i_{\tilde t-1}}{j_{\tilde t-1}} + 1\\
&\leq \max_{i,j\in [K]}  p^*_k(y(i,j))\ratioT{\tilde t-1}{i}{j} + 1\\
&\leq \frac{p^*_k(y(i_{\tilde t-1},j_{\tilde t-1}))\rho(y(i_{\tilde t-1},j_{\tilde t-1}))}{\max\left(\varepsilon, \frac{\varepsilon+\Delta_{i_{\tilde t-1}}}{3},\frac{\varepsilon+\Delta_{j_{\tilde t-1}}}{3}\right)^2}C^2_{\tilde t-1} + 1\\
&\leq \max_{i,j\in [K]} \frac{p^*_k(y(i,j))\rho(y(i,j))}{\max\left(\varepsilon, \frac{\varepsilon+\Delta_i}{3},\frac{\varepsilon+\Delta_j}{3}\right)^2}C^2_{\tau} + 1.
\end{align*}
Since $\sum_{k=1}^K T_k(\tau) = \tau$, summing up the upper bound of $T_k(t)$ above yields
\begin{align}
    \tau \leq H_\varepsilon C^2_\tau + K. \label{eq:key-for-sample-complexity}
\end{align}
Combined with
Lemmas~\ref{lem:complete-derivation} and \ref{lem:complete-derivation-2}
in Appendix~\ref{sec:proof-comp-sample}, we get what we wanted.
\end{proof}


\section{Lemmas for Proof of Theorem~\ref{thm:sample-complexity}}\label{sec:proof-comp-sample}
Here, we introduce two lemmas, which is used in the proof of Theorem~\ref{thm:sample-complexity} after having the inequality \eqref{eq:key-for-sample-complexity}.
Both lemmas are derived from
the following proposition given by \citet{Abbasi-Yadkori2010}.

\begin{prop}\label{prop:bound-of-confidence-interval}
\citep[Lemma 10]{Abbasi-Yadkori2010}
Let the maximum $l_2$ norm of features be denoted as $L$. Then, $\det(A^\lambda_n)$ is bounded as
\[\det(A^\lambda_n) \leq (\lambda + nL^2/d)^d.\]
\end{prop}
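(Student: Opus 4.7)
The plan is to prove the bound via the classical AM--GM inequality applied to the eigenvalues of $A^\lambda_n$. Since $A^\lambda_n = \lambda I + \sum_{t=1}^n x_{a_t} x_{a_t}^\top$ is a sum of positive semi-definite matrices plus $\lambda I$, it is symmetric and positive definite, so its eigenvalues $\mu_1,\dots,\mu_d$ are all positive and both its determinant and trace decompose spectrally.

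First, I would write $\det(A^\lambda_n) = \prod_{i=1}^d \mu_i$. By the AM--GM inequality applied to positive reals,
\begin{equation*}
\prod_{i=1}^d \mu_i \leq \left(\frac{1}{d}\sum_{i=1}^d \mu_i\right)^d = \left(\frac{\operatorname{tr}(A^\lambda_n)}{d}\right)^d.
\end{equation*}
This converts a determinant bound into a trace bound, which is much easier to control because the trace is linear in the rank-one updates $x_{a_t}x_{a_t}^\top$.

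Next, I would evaluate the trace using linearity and the identity $\operatorname{tr}(xx^\top)=\|x\|_2^2$:
\begin{equation*}
\operatorname{tr}(A^\lambda_n) = \operatorname{tr}(\lambda I) + \sum_{t=1}^n \operatorname{tr}(x_{a_t}x_{a_t}^\top) = d\lambda + \sum_{t=1}^n \|x_{a_t}\|_2^2 \leq d\lambda + nL^2,
\end{equation*}
where the last inequality uses the assumption that every feature $x_{a_t}$ has $\|x_{a_t}\|_2 \leq L$. Substituting this into the AM--GM bound yields
\begin{equation*}
\det(A^\lambda_n) \leq \left(\frac{d\lambda+nL^2}{d}\right)^d = \left(\lambda + \frac{nL^2}{d}\right)^d,
\end{equation*}
which is exactly the claim.

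There is essentially no obstacle here: the proof is a one-two punch of AM--GM and the trace identity, and both steps are routine. The only subtlety worth flagging is that AM--GM requires positivity of the eigenvalues, which is guaranteed by $\lambda > 0$ together with the positive semi-definiteness of the empirical covariance $\sum_t x_{a_t}x_{a_t}^\top$; this holds regardless of whether the arm selection strategy is static or adaptive, so the bound applies in the fully adaptive regime of LinGapE.
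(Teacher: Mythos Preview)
Your proof is correct and is exactly the standard AM--GM plus trace argument; the paper does not give its own proof of this proposition but simply cites \citet[Lemma~10]{Abbasi-Yadkori2010}, whose proof is precisely the one you wrote.
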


Now, we introduce the lemmas used in the final part of the proof of Theorem~\ref{thm:sample-complexity} as follows.

\begin{lem}
\label{lem:complete-derivation}
Let $C_t$ be
\[C_t = R\sqrt{2\log\frac{K^2\det(A_t)^{\frac12}\det(\lambda I)^{-\frac12}}{\delta}} + \lambda^{\frac12}S.\]
If
\begin{align}
\lambda \leq \frac{2R^2}{S^2}\log\frac{K^2}{\delta}, \label{eq:lambda-condition}
\end{align}
then 
\[\tau \leq H_\varepsilon C^2_\tau + K\]
implies
\begin{align}\label{eq:sample-complexity-2}
\tau \leq 8H_\varepsilon R^2\log\frac{K^2}{\delta} + C(H_\varepsilon,\delta),
\end{align}
where $C(H_\varepsilon,\delta)$ is
\[C(H_\varepsilon,\delta) = \mathcal{O}\left(H_\varepsilon\log\left(H_\varepsilon \log\frac{1}{\delta}\right)\right).\]
\end{lem}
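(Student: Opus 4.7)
The plan is to unroll $C_\tau^2$ into a bound involving $\log\det A_\tau$, substitute Prop.~\ref{prop:bound-of-confidence-interval} to obtain a self-referential inequality of the form $\tau\le\alpha+\beta\log(1+\tau\gamma)$, and then solve it by a one-step bootstrap that preserves the precise leading coefficient $8$.

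First I would apply $(a+b)^2\le 2(a^2+b^2)$ with $a=R\sqrt{2\log(K^2\det(A_\tau)^{1/2}\det(\lambda I)^{-1/2}/\delta)}$ and $b=\lambda^{1/2}S$, then split the logarithm via $\log(K^2 X^{1/2}/\delta)=\log(K^2/\delta)+\tfrac12\log X$ with $X=\det(A_\tau)/\det(\lambda I)$, and finally use hypothesis \eqref{eq:lambda-condition} to absorb $2\lambda S^2\le 4R^2\log(K^2/\delta)$. Together these yield
\[
C_\tau^2\le 8R^2\log\frac{K^2}{\delta}+2R^2\log\frac{\det(A_\tau)}{\det(\lambda I)}.
\]
Prop.~\ref{prop:bound-of-confidence-interval} then bounds the determinant ratio by $d\log(1+\tau L^2/(d\lambda))$, and plugging this into $\tau\le H_\varepsilon C_\tau^2+K$ gives $\tau\le\alpha+\beta\log(1+\tau\gamma)$ with $\alpha=8H_\varepsilon R^2\log(K^2/\delta)+K$, $\beta=2H_\varepsilon R^2 d$, and $\gamma=L^2/(d\lambda)$.

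Next I would solve this implicit inequality in two stages. Using the elementary bound $\log(1+y)\le\epsilon y+\log(1/\epsilon)$, valid for $\epsilon\in(0,1]$, with the choice $\epsilon=\min\{1,\,1/(2\beta\gamma)\}$, the term $\beta\log(1+\tau\gamma)$ is bounded by $\tau/2$ plus an additive constant; absorbing $\tau/2$ into the left-hand side produces a crude bound
\[
\tau\le \tau_1 := 2\alpha+2\max\{0,\,\beta\log(2\beta\gamma)\}.
\]
Plugging $\tau_1$ back into the original implicit inequality gives $\tau\le\alpha+\beta\log(1+\tau_1\gamma)$, whose leading term is $\alpha=8H_\varepsilon R^2\log(K^2/\delta)+K$ and whose remainder $\beta\log(1+\tau_1\gamma)$, upon substituting the values of $\alpha,\beta,\gamma$ and noting that $\alpha\gamma=\mathcal{O}(H_\varepsilon\log(1/\delta))$ and $\beta\gamma=\mathcal{O}(H_\varepsilon)$, is of order $H_\varepsilon\log(H_\varepsilon\log(1/\delta))$ once the constant factors $d,R,L,\lambda,S,K$ are absorbed into the implicit constant.

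The main obstacle is retaining the exact leading coefficient $8$: a single-shot application of the standard ``$x\le a+b\log x$ implies $x\le 2a+2b\log(2b)$'' lemma would double the leading term from $8H_\varepsilon R^2\log(K^2/\delta)$ to $16H_\varepsilon R^2\log(K^2/\delta)$. The bootstrap structure---deriving a crude upper bound $\tau_1$ first, then inserting $\tau_1$ (not $\tau$) into the logarithm only once---is what confines the factor $2$ to the slowly growing lower-order remainder while preserving $\alpha$ as the leading term and delivering the claimed $C(H_\varepsilon,\delta)$.
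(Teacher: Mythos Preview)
Your proposal is correct and follows essentially the same two-stage bootstrap as the paper: reduce $\tau\le H_\varepsilon C_\tau^2+K$ via Prop.~\ref{prop:bound-of-confidence-interval} to an implicit inequality $\tau\le\alpha+\beta\log(1+\tau\gamma)$ with the same $\alpha=8H_\varepsilon R^2\log(K^2/\delta)+K$, obtain a crude bound, then feed it back into the logarithm so that the leading term $\alpha$ is preserved. The only differences are cosmetic: the paper bounds $\lambda^{1/2}S$ by the square-root term first (giving $C_\tau\le 2R\sqrt{\cdots}$, hence $\beta=4H_\varepsilon R^2 d$) whereas your $(a+b)^2\le 2(a^2+b^2)$ route yields the slightly sharper $\beta=2H_\varepsilon R^2 d$; and for the crude step the paper uses $\log(1+x)\le\sqrt{x}$ to reach a quadratic in $\sqrt{\tau'}$, while you linearize via $\log(1+y)\le\epsilon y+\log(1/\epsilon)$. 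Both devices produce a crude bound of the same order, and both yield the claimed $C(H_\varepsilon,\delta)=\mathcal{O}\bigl(H_\varepsilon\log(H_\varepsilon\log(1/\delta))\bigr)$ after the second substitution.
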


\begin{proof}
From Proposition~\ref{prop:bound-of-confidence-interval}, we have 
\begin{align*}
C_\tau &\leq R\sqrt{2\log\frac{K^2}{\delta} + d\log\left(1+\frac{\tau L^2}{\lambda d}\right)}+\lambda^{\frac12}S \\
&\leq  2R\sqrt{2\log\frac{K^2}{\delta} + d\log\left(1+\frac{\tau L^2}{\lambda d}\right)}.
\end{align*}
The second inequality follows from condition \eqref{eq:lambda-condition}. Therefore, we can write
\begin{align*}
\tau &\leq H_\varepsilon C^2_\tau + K\\
&\leq 4H_\varepsilon R^2 \left(2\log\frac{K^2}{\delta} + d\log\left(1+\frac{\tau L^2}{\lambda d}\right)\right) + K.
\end{align*}

Let $\tau'$
a parameter satisfying
\begin{align}
\tau = 4H_\varepsilon R^2 \left(2\log\frac{K^2}{\delta} + d\log\left(1+\frac{\tau' L^2}{\lambda d}\right)\right) + K.\label{eq:tau-bound}
\end{align}
Then, $\tau' \leq \tau$ holds.

For $N$ defined as
\[N = 8H_\varepsilon R^2\log\frac{K^2}{\delta} + K,\]
we have
\begin{align*}
\tau' &\leq \tau\\
&= 4H_\varepsilon R^2d\log\left(1+\frac{\tau'L^2}{\lambda d}\right)+ N\\
&\leq 4H_\varepsilon R^2\sqrt{dL^2\tau'/\lambda} + N.
\end{align*}
By solving this inequality, we obtain
\begin{align*}
\sqrt{\tau'} &\leq  4H_\varepsilon R^2\sqrt{dL^2/\lambda} + \sqrt{16H^2_\varepsilon R^4dL^2\tau'/\lambda +N^2}\\
&\leq 2\sqrt{16H^2_\varepsilon R^4dL^2/\lambda +N^2}.
\end{align*}
Let $M$ be the right hand side of the inequality:
\[M = 2\sqrt{16H^2_\varepsilon R^4dL^2/\lambda +N^2}.\]
Then, using this upper bound of $\tau'$ in \eqref{eq:tau-bound} yields
\[\tau \leq 8H_\varepsilon R^2\log\frac{K^2}{\delta} + C(H_\varepsilon,\delta),\]
where $C(H_\varepsilon,\delta)$ is denoted as
\begin{align}
    C(H_\varepsilon,\delta) &= K + 4H_\varepsilon R^2d\log\left(1+\frac{M^2L^2}{\lambda d}\right)\label{eq:detail-C}\\ &=\mathcal{O}\left(H_\varepsilon\log\left(H_\varepsilon \log\frac{1}{\delta}\right)\right)\notag
\end{align}

\end{proof}

\begin{lem}
\label{lem:complete-derivation-2}
If $\lambda$ is set as $\lambda > 4H_\varepsilon R^2L^2$, then for $C_t$ defined as \eqref{eq:confidence-ellipsoid},
\[\tau \leq H_\varepsilon C^2_\tau + K\]
implies
\begin{align*}\tau \leq \left(8H_\varepsilon R^2\log\frac{K^2}{\delta}  + 2C'\right),
\end{align*}
where $C' = 2H_\varepsilon\lambda S^2+K$.
\end{lem}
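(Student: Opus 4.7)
The plan is to derive an explicit, \emph{linear-in-$\tau$} upper bound on $C_\tau^2$, substitute it into the hypothesis $\tau \leq H_\varepsilon C_\tau^2 + K$, and then use the condition $\lambda > 4H_\varepsilon R^2 L^2$ to absorb the $\tau$-dependent term on the right-hand side. In contrast to the proof of Lemma~\ref{lem:complete-derivation}, we will not need to go through a $\sqrt{\tau}$ reduction: the large regularizer makes the linear inequality solvable in one step.

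First, I would apply Prop.~\ref{prop:bound-of-confidence-interval} inside the definition of $C_\tau$ (with the same $K^2/\delta$ union-bound form used in Lemma~\ref{lem:complete-derivation}), giving
\[
C_\tau \leq R\sqrt{2\log\frac{K^2}{\delta} + d\log\left(1+\frac{\tau L^2}{\lambda d}\right)} + \lambda^{1/2}S.
\]
Combining $(a+b)^2\leq 2a^2+2b^2$ with the elementary bound $d\log(1+\tau L^2/(\lambda d))\leq \tau L^2/\lambda$ then yields
\[
C_\tau^2 \leq 4R^2\log\frac{K^2}{\delta} + \frac{2R^2 L^2}{\lambda}\,\tau + 2\lambda S^2.
\]

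Plugging this into $\tau \leq H_\varepsilon C_\tau^2 + K$ gives
\[
\tau \leq 4H_\varepsilon R^2\log\frac{K^2}{\delta} + \frac{2H_\varepsilon R^2 L^2}{\lambda}\,\tau + 2H_\varepsilon\lambda S^2 + K.
\]
Under the hypothesis $\lambda > 4H_\varepsilon R^2 L^2$, the coefficient $2H_\varepsilon R^2 L^2/\lambda$ of $\tau$ on the right is strictly less than $1/2$. Moving this term to the left and multiplying through by $2$ then yields
\[
\tau \leq 8H_\varepsilon R^2\log\frac{K^2}{\delta} + 4H_\varepsilon\lambda S^2 + 2K = 8H_\varepsilon R^2\log\frac{K^2}{\delta} + 2C',
\]
as required.

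There is no genuine obstacle here: the only choices to make are which elementary inequalities to use when bounding $C_\tau^2$, and the assumption on $\lambda$ is tailored precisely so that $\log(1+x)\leq x$ (rather than the weaker square-root device used in Lemma~\ref{lem:complete-derivation}) suffices to close the recursion without any leftover $\log\tau$ factor. The only subtlety worth checking is that the $K^2$ factor inside the logarithm matches the one used by Lemma~\ref{lem:complete-derivation} so that the two regimes are comparable.
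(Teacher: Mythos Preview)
Your proof is correct and essentially identical to the paper's: both apply Prop.~\ref{prop:bound-of-confidence-interval}, use $(a+b)^2\le 2(a^2+b^2)$ together with $\log(1+x)\le x$ (the paper phrases this as $(1+1/x)^x\le e$) to obtain the same linear-in-$\tau$ bound on $C_\tau^2$, and then use $\lambda>4H_\varepsilon R^2L^2$ to bound $(1-2H_\varepsilon R^2L^2/\lambda)^{-1}\le 2$.
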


\begin{proof}
Again, by Prop.~\ref{prop:bound-of-confidence-interval}, we have
\begin{align*}
C_\tau &\leq R\sqrt{2\log\frac{K^2}{\delta} + d\log\left(1+\frac{\tau L^2}{\lambda d}\right)}+\lambda^{\frac12}S.
\end{align*}
From the fact $(a+b)^2 \leq 2(a^2+b^2)$ and $(1+\frac{1}{x})^x \leq e$, we have 
\begin{align*}
\tau &\leq H_\varepsilon C^2_\tau + K\\
&\leq 2H_\varepsilon \left(2R^2\log\frac{K^2}{\delta} + \frac{\tau R^2L^2}{\lambda} + \lambda S^2\right) + K.
\end{align*}
Therefore, we can conclude that if $\lambda > 4H_\varepsilon R^2L^2$, then
\begin{align*}
\tau &\leq \left(1-\frac{2H_\varepsilon R^2L^2}{\lambda}\right)^{-1}\left(4H_\varepsilon R^2\log\frac{K^2}{\delta}  + C'\right)\\
&\leq 2\left(4H_\varepsilon R^2\log\frac{K^2}{\delta}  + C'\right).
\end{align*}
\end{proof}

\section{Proof of Theorem~\ref{thm:problem-complexity-bound}}\label{sec:proof-prob-complexity}
In this appendix
we give the proof of Theorem~\ref{thm:problem-complexity-bound}. This follows  straightforwardly from the definition of problem complexity $H_\varepsilon$ in \eqref{eq:problem-complexity} and the ratio $p^*_k(y(i,j)$ in \eqref{eq:p-def}.

\begin{proof}[Proof of Theorem \ref{thm:problem-complexity-bound}]
First, we bound the $\rho(y(i,j))$, which is the optimal value of
\begin{align}
\min_{p_k,w_k} \quad& \sum_{k=1} \frac{w_k^2}{p_k} \notag\\
\mathrm{s.t.} \quad&y(i,j) = \sum_{k=1}^K w_kx_k\notag\\
&\sum_{i=1}^K \,p_k = 1,\, p_k\geq 0,\,p_k, w_k \in \mathbb{R}.\label{eq:4}
\end{align}
Now, since $y(i,j) = y(i,a^*) + y(a^*,j)$,
$p'_k$ and $w'_k$ defined as
\begin{align*}
&p'_k = \frac{p_k^*(y(i,a^*))+p^*_k(y(a^*,j))}{2},\\
&w'_k = w_k^*(y(i,a^*)) + w_k^*(y(a^*,j))
\end{align*}
satisfy the condition of \eqref{eq:4}. Therefore, we have
\begin{align*}
\rho(y(i,j)) &\leq \sum_{k=1} \frac{(w'_k)^2}{p'_k}\\
&= 2\sum_{k=1} \frac{( w_k^*(y(i,a^*)) + w_k^*(y(a^*,j)))^2}{p_k^*(y(i,a^*))+p^*_k(y(a^*,j))}\\
&\leq 4\sum_{k=1} \frac{( w_k^*(y(i,a^*)))^2 + (w_k^*(y(a^*,j)))^2}{p_k^*(y(i,a^*))+p^*_k(y(a^*,j))}\\
&\leq 4\sum_{k=1} \frac{( w_k^*(y(i,a^*)))^2 }{p_k^*(y(i,a^*))}+\frac{(  (w_k^*(y(a^*,j)))^2}{p^*_k(y(a^*,j))}\\
&= 4\rho(y(i,a^*))+4\rho(y(a^*,j)).
\end{align*}
Using this upper bound, we can bound the problem complexity $H_0$ as follows. Let $i^*_k$ and $j^*_k$ be defined as
\[(i^*_k,j^*_k) = \argmax_{i,j\in[K]}\frac{p^*_k(y(i,j))\rho(y(i,j))}{\max\left(\Delta_i^2,\Delta_j^2\right)}.\]
and we have
\begin{align*}
H_0 &= 9\sum_{k=1}^K \max_{i,j\in[K]}\frac{p^*_k(y(i,j))\rho(y(i,j))}{\max\left(\Delta_i^2,\Delta_j^2\right)}\\
&= 9\sum_{k=1}^K \frac{p^*_k(y(i^*_k,j^*_k))\rho(y(i^*_k,j^*_k))}{\max\left(\Delta_{i^*_k}^2,\Delta_{j^*_k}^2\right)}\\
&\leq 36\sum_{k=1}^K p^*_k(y(i^*_k,j^*_k)) \frac{\rho(y(i^*_k,a^*))+\rho(a^*,j^*_k))}{\max\left(\Delta_{i^*_k}^2,\Delta_{j^*_k}^2\right)}\\
&\leq 36\sum_{k=1}^K p^*_k(y(i^*_k,j^*_k)) \left(\frac{\rho(y(i^*_k,a^*))}{\Delta_{i^*_k}^2}+\frac{\rho(a^*,j^*_k))}{\Delta_{j^*_k}^2}\right)\\
&\leq 72H'_{\text{oracle}}.
\end{align*}
which was what we wanted. The last inequality holds from $\sum_{k=1}^K p^*_k(i,j) = 1$ for all $i,j\in[K]$.
\end{proof}

\end{document}